\theoremstyle{plain}
\newtheorem{theorem}{Theorem}[section]
\newtheorem{lemma}[theorem]{Lemma}
\newtheorem{definition}[theorem]{Definition}
\newtheorem{assumption}[theorem]{Assumption}
\newtheorem{example}[theorem]{Example}
\newtheorem{remark}[theorem]{Remark}
\newcommand{\R}[0]{\mathbb{R}}
\newcommand{\E}[0]{\mathbb{E}}
\newcommand{\cR}[0]{\mathcal{R}}
\newcommand{\cJ}[0]{\mathcal{J}}
\newcommand{\cS}[0]{\mathcal{S}}
\newcommand{\cM}[0]{\mathcal{M}}
\newcommand{\cG}[0]{\mathcal{G}}
\newcommand{\cF}{\mathcal{F}}
\newcommand{\dd}{\mathrm{d}}
\newcommand{\Prox}[0]{\mathrm{Prox}}
\newcommand{\norm}[1]{\left\lVert #1 \right\rVert}
\newcommand{\abs}[1]{\left\lvert #1 \right\rvert}
\newcommand{\inprod}[2]{\left\langle #1, #2 \right\rangle}
\let\classAND\AND
\let\AND\relax
\let\AND\classAND
\renewcommand{\cite}{\citep}
\title{On the Stochastic (Variance-Reduced) Proximal Gradient Method for Regularized Expected Reward Optimization}
\author{\name Ling Liang \email liang.ling@u.nus.edu \\
      \addr Department of Mathematics \\ University of Maryland at College Park
      \AND
      \name Haizhao Yang \email  hzyang@umd.edu \\
      \addr Department of Mathematics and Department of Computer Science \\ University of Maryland at College Park}
\begin{document}

\maketitle

\begin{abstract}
We consider a regularized expected reward optimization problem in the non-oblivious setting that covers many existing problems in reinforcement learning (RL). In order to solve such an optimization problem, we apply and analyze the classical stochastic proximal gradient method. In particular, the method has shown to admit an $O(\epsilon^{-4})$ sample complexity to an $\epsilon$-stationary point, under standard conditions. Since the variance of the classical stochastic gradient estimator is typically large, which slows down the convergence, we also apply an efficient stochastic variance-reduce proximal gradient method with an importance sampling based ProbAbilistic Gradient Estimator (PAGE). Our analysis shows that the sample complexity can be improved from $O(\epsilon^{-4})$ to $O(\epsilon^{-3})$ under additional conditions. Our results on the stochastic (variance-reduced) proximal gradient method match the sample complexity of their most competitive counterparts for discounted Markov decision processes under similar settings. To the best of our knowledge, the proposed methods represent a novel approach in addressing the general regularized reward optimization problem. 
\end{abstract}

\section{Introduction}

Reinforcement learning (RL) \cite{sutton2018reinforcement} has recently become a highly active research area of machine learning that learns to make sequential decisions via interacting with the environment. In recent years, RL has achieved tremendous success so far in many applications such as control, job scheduling, online advertising, and game-playing \cite{zhang1995reinforcement,pednault2002sequential,mnih2013playing}, to mention a few. One of the central tasks of RL is to solve a certain (expected) reward optimization problem for decision-making. Following the research theme, we consider the following problem of maximizing the regularized expected reward:
\begin{equation}
	\label{eq-max-reward}
	\max_{\theta\in\R^n}\; \cF(\theta):=\E_{x\sim\pi_\theta}\left[\cR_\theta(x)\right] - \cG(\theta),
\end{equation}
where $\cG: \R^n \to \R\cup\{+\infty\}$ is a closed proper convex (possibly nonsmooth) function, $x\in \R^d$, $\cR_\theta:\R^d\to \R$ is the reward function depending on the parameter $\theta$, and $\pi_\theta$ denotes the probability distribution over a given subset $\cS\subseteq\R^d$ parameterized by $\theta\in \R^n$. By adapting the convention in RL, we call $\pi_\theta$ a policy parameterized by $\theta$. Moreover, for the rest of this paper, we denote $\cJ(\theta):=\E_{x\sim\pi_\theta}\left[\cR_\theta(x)\right]$ as the expected reward function in the \textit{non-oblivious} setting. The learning objective is to learn a decision rule via finding the policy parameter $\theta$ that maximizes the regularized expected reward. To the best of our knowledge, the study on the general model \eqref{eq-max-reward} has been limited in the literature. Hence, developing and analyzing algorithmic frameworks for solving the problem is of great interest. 

There are large body of works in supervised learning focusing on the \textit{oblivious} setting \cite{zhang2004solving,hastie2009elements,shapiro2021lectures}, i.e., $\cJ(\theta):=\E_{x\sim\pi}\left[\cR_\theta(x)\right]$, where $x$ is sampled from an invariant distribution $\pi$. Clearly, problem \eqref{eq-max-reward} can be viewed as a generalization of those machine learning problems with oblivious objective functions. In the literature, an RL problem is often formulated as a discrete-time and discounted Markov decision processes (MDPs) \cite{sutton2018reinforcement} which aims to learn an optimal policy via optimizing the (discounted) cumulative sum of rewards. We can also see that the learning objective of an MDP can be covered by the problem \eqref{eq-max-reward} with the property that the function $\cR(x)$ does not depend on $\theta$ (see Example \ref{eg-mdp}). Recently, the application of RL for solving combinatorial optimization (CO) problems which are typically NP-hard has attracted much attention. These CO problems may include the traveling salesman problem and related problems \cite{bello2016neural, mazyavkina2021reinforcement}, the reward optimization problem arising from the finite expression method  \cite{liang2022finite,song2023finite}, and the general binary optimization problem \cite{chen2023monte}, to name just a few. The common key component of the aforementioned applications is the reward optimization, which could also be formulated as problem \eqref{eq-max-reward}. There also exist problems with general reward functions that are outside the scope of cumulative sum of rewards of trajectories that are used in MDPs. An interesting example is the MDP with general utilities; see, e.g., \cite{zhang2020variational,kumar2022policy,barakat2023reinforcement} and references therein.  

Adding a regularizer to the objective function is a commonly used technique to impose desirable structures to the solution and/or to greatly enhance the expression power and applicability of RL \cite{lan2023policy,zhan2023policy}. When one considers the direct/simplex parameterization \cite{agarwal2021theory} of $\pi_\theta$, a regularization function using the indicator function for the standard probability simplex is needed. Moreover, by using other indicator functions for general convex sets, one is able to impose some additional constraints on the parameter $\theta$. For the softmax parameterization, one may also enforce a bounded constraint to $\theta$ to prevent it taking values that are too large. This can avoid potential numerical issues, including the overflow error on a floating point system. On the other hand, there are incomplete parametric policy classes, such as the log-linear and neural policy classes, that are often formulated as $\{\pi_\theta|\theta\in \Theta\}$, where $\Theta$ is a closed convex set \cite{agarwal2021theory}. In this case, the indicator function is still necessary and useful. Some recent works (see, e.g., \cite{ahmed2019understanding, agarwal2020optimality, mei2020global, cen2022fast}) have investigated the impact of the entropy regularization for MDPs. Systematic studies on general convex regularization for MDPs have been limited until the recent works \cite{pham2020hybrid,lan2023policy,zhan2023policy}. Finally, problem \eqref{eq-max-reward} takes the same form as the stochastic optimization problem with decision-dependent distributions (see e.g., \cite{drusvyatskiy2023stochastic} and references therein), leading to numerous real-world applications such as performative prediction \cite{mendler2020stochastic,perdomo2020performative}, concept drift \cite{gama2014survey}, strategic classification \cite{tsirtsis2024optimal,milli2019social}, and casual inference \cite{yao2021survey}. Consequently, we can see that problem \eqref{eq-max-reward} is in fact quite general and has promising modeling power, as it covers many existing problems in the literature.

The purpose of this paper is to leverage existing tools and results in MDPs and nonconvex optimization for solving the general regularized expected reward optimization problem \eqref{eq-max-reward} with general policy parameterization, which, to the best of our knowledge, has not been formally considered in the RL literature. It is well known that the policy gradient method \cite{williams1992simple, sutton1999policy,baxter2001infinite}, which lies in the heart of RL, is one of the most competitive and efficient algorithms due to its simplicity and versatility. Moreover, the policy gradient method is readily implemented and can be paired with other effective techniques. In this paper, we observe that the stochastic proximal gradient method, which shares the same spirit of the policy gradient method, can be applied directly for solving the targeted problem \eqref{eq-max-reward} with convergence guarantees to a stationary point. Since the classical stochastic gradient estimator typically introduces a large variance, there is also a need to consider designing advanced stochastic gradient estimators with smaller variances. To this end, we shall also look into a certain stochastic variance-reduced proximal gradient method and analyze its convergence properties. In particular, the contributions of this paper are summarized as follows.
\begin{itemize}
    \item We consider a novel and general regularized reward optimization model \eqref{eq-max-reward} that covers many existing important models in the machine learning and optimization literature. Thus, problem \eqref{eq-max-reward} admits a promising modeling power which encourages potential applications.
    \item In order to solve our targeted problem, we consider applying the classical stochastic proximal gradient method and analyze its convergence properties. We first demonstrate that the gradient of $\cJ(\cdot)$ is Lipschitz continuous under standard conditions with respect to the reward function $\cR_\theta(\cdot)$ and the parameterized policy $\pi_\theta(\cdot)$. Using the L-smoothness of $\cJ(\cdot)$, we then show that the classical stochastic proximal gradient method with a constant step-size (depending only on the Lipschitz constant for $\nabla_\theta\cJ(\cdot)$) for solving problem \eqref{eq-max-reward} outputs an $\epsilon$-stationary point (see Definition \ref{def-stationary}) within $T:=O(\epsilon^{-2})$ iterations, and the sample size for each iteration is $O(\epsilon^{-2})$, where $\epsilon>0$ is a given tolerance. Thus, the total sample complexity becomes $O(\epsilon^{-4})$, which matches the current state-of-the-art sample complexity of the classical stochastic policy gradient for MDPs; see e.g., \cite{williams1992simple,baxter2001infinite,zhang2020global,xiong2021non,yuan2022general}. 
    \item Moreover, in order to further reduce the variance of the stochastic gradient estimator, we utilize an importance sampling based probabilistic gradient estimator which leads to an efficient single-looped variance reduced method. The application of this probabilistic gradient estimator is motivated by the recent progress in developing efficient stochastic variance-reduced gradient methods for solving stochastic optimization \cite{li2021page} and (unregularized) MDPs \cite{gargiani2022page}. We show that, under additional technical conditions, the total sample complexity is improved from $O(\epsilon^{-4})$ to $O(\epsilon^{-3})$. This result again matches the results of some existing competitive variance-reduced methods  for MDPs \cite{papini2018stochastic,xu2019sample,pham2020hybrid,huang2021bregman,yang2022policy,gargiani2022page}. Moreover, to the best of our knowledge, the application of the above probabilistic gradient estimator is new for solving the regularized expected reward optimization \eqref{eq-max-reward}.
\end{itemize}

The rest of this paper is organized as follows. We first summarize some relative works in Section \ref{section-relatedwork}. Next, in Section \ref{section-preliminary}, we present some background information that are needed for the exposition of this paper. Then, in Section \ref{section-spgd}, we describe the classical stochastic proximal gradient method for solving \eqref{eq-max-reward} and present the convergence properties of this method under standard technical conditions. Section \ref{section-page} is dedicated to describing and analyzing the stochastic variance-reduced proximal gradient method with an importance sampling based probabilistic gradient estimator. Finally, we make some concluding remarks, and list certain limitations and future research directions of this paper in Section \ref{section-conclusions}.

\section{Related Work}\label{section-relatedwork}

\textbf{The policy gradient method}. One of the most influential algorithms for solving RL problems is the policy gradient method, built upon the foundations established in \cite{williams1992simple, sutton1999policy,baxter2001infinite}. Motivated by the empirical success of the policy gradient method and its variants, analyzing the convergence properties for these methods has long been one of the most active research topics in RL. Since the objective function $\cJ(\theta)$ is generally nonconcave, early works \cite{sutton1999policy,pirotta2015policy} focused on the asymptotic convergence properties to a stationary point. By utilizing the special structure in (entropy regularized) MDPs, recent works \cite{liu2019neural,mei2020global,agarwal2021theory,li2021softmax,xiao2022convergence,cen2022fast,lan2023policy,fatkhullin2023stochastic} provided some exciting results on the global convergence. Meanwhile, since the exact gradient of the objective function can hardly be computed, sampling-based approximated/stochastic gradients have gained much attention. Therefore, many works investigated the convergence properties, including the iteration and sample complexities, for these algorithms with inexact gradients; see e.g., \cite{zhang2020global,liu2020improved,zhang2021sample,xiong2021non,yuan2022general,lan2023policy} and references therein. 

\textbf{Variance reduction}. While the classical stochastic gradient estimator is straightforward and simple to implement, one of its most critical issues is that the variance of the inexact gradient estimator can be large, which generally slows down the convergence of the algorithm. To alleviate this issue, an attractive approach is to pair the sample-based policy gradient methods with certain variance-reduced techniques. Variance-reduced methods were originally developed for solving (oblivious) stochastic optimization problems \cite{johnson2013accelerating,nguyen2017sarah,fang2018spider,li2021page} typically arising from supervised learning tasks. Motivated by the superior theoretical properties and practical performance of the stochastic variance-reduced gradient methods, similar algorithmic frameworks have recently been applied for solving MDPs \cite{papini2018stochastic,xu2019sample,yuan2020stochastic,pham2020hybrid,huang2021bregman,yang2022policy,gargiani2022page}. 

\textbf{Stochastic optimization with decision-dependent distributions}. Stochastic optimization is the core of modern machine learning applications, whose main objective is to learn a decision rule from a limited data sample that is assumed to generalize well to the entire population \cite{drusvyatskiy2023stochastic}. In the classical supervised learning framework \cite{zhang2004solving,hastie2009elements,shapiro2021lectures}, the underlying data distribution is assumed to be static, which turns out to be a crucial assumption when analyzing the convergence properties of the common stochastic optimization algorithms. On the other hand, there are problems where the distribution changes over the course of iterations of a specific algorithm, and these are closely related to the concept of performative prediction \cite{perdomo2020performative}. In this case, understanding the convergence properties of the algorithm becomes more challenging. Toward this, some recent progress has been made on (strongly) convex stochastic optimization with decision-dependent distributions \cite{mendler2020stochastic,perdomo2020performative,drusvyatskiy2023stochastic}. Moreover, other works have also considered nonconvex problems and obtained some promising results; see \cite{dong2023approximate,jagadeesan2022regret} and references therein. Developing theoretical foundation for these problems has become a very active field nowadays. 

\textbf{RL with general utilities}. It is known that the goal of an agent associated with an MDP is to seek an optimal policy via maximizing the cumulative discounted reward \cite{sutton2018reinforcement}. However, there are decision problems of interest having more general forms. Beyond the scope of the expected cumulative reward in MDPs, some recent works also looked into RL problems with general utilities; see e.g., \cite{zhang2020variational,kumar2022policy,barakat2023reinforcement} as mentioned previously. Global convergence results can also be derived via investigating the hidden convex structure \cite{zhang2020variational} inherited from the MDP. 

\section{Preliminary} \label{section-preliminary}

In this paper, we assume that the optimal objective value for problem \eqref{eq-max-reward}, denoted by $\cF^*$, is finite and attained, and the reward function $\cR_\theta(\cdot)$ satisfies the following assumption.
\begin{assumption}
    \label{assumption-reward}
    The following three conditions with respect to the function $\cR_\theta(\cdot)$ hold:
    \begin{enumerate}
        \item There exists a constant $U>0$ such that 
        \[
            \sup_{\theta\in\R^n,x\in\R^d}\; \abs{\cR_\theta(x)}\leq U.
        \]
        \item  $\cR_\theta(\cdot) $ is twice continuously differentiable with respect to $\theta$, and there exist positive constants $\widetilde{C}_g$ and $\widetilde{C}_h$ such that
        \[
            \sup_{\theta\in\R^n,\; x\in\R^d}\; \norm{\nabla_\theta\cR_\theta(x)}\leq  \widetilde{C}_g, \quad  \sup_{\theta\in\R^n,\; x\in\R^d}\; \norm{\nabla_\theta^2\cR_\theta(x)}_2 \leq  \widetilde{C}_h.
        \]
    \end{enumerate}
\end{assumption}
The first condition on the boundedness of the function $\cR_\theta(\cdot)$, which is commonly assumed in the literature \cite{sutton2018reinforcement}, ensures that $\cJ(\theta)$ is well-defined. And the second condition will be used to guarantee the well-definiteness and L-smoothness of the gradient $\nabla_\theta\cJ(\theta)$. We remark here that when the reward function $\cR_\theta(x)$ does not depend on $\theta$ (see e.g., Example \ref{eg-mdp}), then the second assumption holds automatically.

To determine the (theoretical) learning rate in our algorithmic frameworks, we also need to make some standard assumptions to establish the L-smoothness of $\cJ(\cdot)$.
\begin{assumption}[Lipschitz and smooth policy assumption]
	\label{assumption-policy}
	The function $\log \pi_{\theta}(x)$ is twice differential with respect to $\theta\in \R^n$ and there exist positive constants $C_g$ and $C_h$ such that 
	\[
        \sup_{x\in \R^d,\; \theta\in \R^n}\; \norm{\nabla_\theta\log\pi_{\theta}(x)}\leq C_g, \quad 
        \sup_{x\in \R^d,\; \theta\in \R^n}\; \norm{\nabla_\theta^2\log\pi_{\theta}(x)}_2\leq  C_h.
    \]
\end{assumption}
This assumption is a standard one and commonly employed in the literature when studying the convergence properties of the policy gradient method for MDPs; see e.g., \cite{pirotta2015policy,papini2018stochastic,xu2020improved,pham2020hybrid, zhang2021convergence,yang2022policy} and references therein.

Under Assumption \ref{assumption-reward} and Assumption \ref{assumption-policy}, it is easy to verify that the gradient for the expected reward function $\cJ(\theta) $ can be written as:
\[
    \begin{aligned}
        \nabla_\theta\cJ(\theta)= &\;\nabla_\theta \left( \int \cR_{\theta}(x)\pi_\theta(x) dx \right) = \int \left(\nabla_\theta \cR_{\theta}(x) +  \cR_{\theta}(x)\frac{\nabla_\theta \pi_\theta(x)}{ \pi_\theta(x)}\right) \pi_\theta(x) dx \\
        = &\; \E_{x\sim\pi_\theta}\left[\cR_\theta(x)\nabla_\theta \log\pi_{\theta}(x) + \nabla_\theta\cR_\theta(x)\right].
    \end{aligned}
\]

We next present an example on the discrete-time discounted MDP, which can be covered by the general model \eqref{eq-max-reward}.
\begin{example}[MDP]\label{eg-mdp}  
    We denote a discrete-time discounted MDP as $\cM := \{S, A, P, R, \gamma, \rho_0\}$, where $S$ and $A$ denote the state space and the action space, respectively, $P(s'|s,a)$ is the state transition probability from $s$ to $s'$ after selecting the action $a$, $R:S\times A\to [0, U]$ is the reward function that is assumed to be uniformly bounded by a constant $U>0$, $\gamma\in [0,1)$ is the discount factor, and $\rho_0$ is the initial state distribution. 
    
    The agent selects actions according to a stationary random policy $\tilde \pi_\theta(\cdot|\cdot): A\times S \to [0,1]$ parameterized by $\theta\in \R^n$. Given an initial state $s_0\in S$, a trajectory $\tau := \{s_t,a_t,r_{t+1}\}_{t = 0}^{H-1}$ can then be generated, where $s_0\sim \rho$, $a_t\sim \tilde \pi_\theta(\cdot|s_t) $, $r_{t+1} = R(s_t, a_t)$, $s_{t+1}\sim P(\cdot|s_t, a_t)$, and $H>0$ is a finite horizon, and the accumulated discounted reward of the trajectory $\tau$ can be defined as $\displaystyle \cR(\tau):= \sum_{t = 0}^{H-1} \gamma^tr_{t+1}$. Then, the learning objective is to compute an optimal parameter $\theta^*$ that maximizes the expected reward function $\cJ(\theta)$ \footnote{Here, the trajectory $\tau$ and the distribution $\rho_\theta$ correspond to $x$ and $\pi_\theta$ in \eqref{eq-max-reward}, respectively.}, i.e., 
    \begin{equation}
        \label{eq-mdp}
        \theta^* \in\; \mathrm{argmax}_\theta\; \cJ(\theta):= \E_{\tau\sim \rho_\theta}\left[\cR(\tau)\right],
    \end{equation}
    where $\rho_\theta(\tau):= \rho_0(s_0)\prod_{t = 0}^{H-1}P(s_{t+1}|s_t,a_t)\tilde \pi_\theta(a_t|s_t)$ denotes the probability distribution of a trajectory $\tau$ being sampled from $\rho_\theta$ that is parameterized by $\theta$.    
    
    In the special case when $S = \{s\}$ (i.e., $|S| = 1$) and $\gamma = 0$, the MDP reduced to a multi-armed bandit problem \cite{robbins1952some} with a reward function simplified as $R:A\to \R$. Particularly, a trajectory $\tau = \{s, a\}$ with the horizon $H_{\tau} = 0$ is generated, where $a\sim \rho_\theta(\cdot) := \tilde{\pi}_\theta(\cdot | s)$, and the accumulated discounted reward reduces to $\cR (x) = R(a)$. As a consequence, problem \eqref{eq-mdp} can be simplified as 
    \[
        \max_{\theta\in \R^n}\;\cJ(\theta) = \E_{a\sim\rho_\theta}\left[R(a)\right].
    \]
    
    By adding a convex regularizer $\cG(\theta)$ to problem \eqref{eq-mdp}, we get the following regularized MDP:
    \[
        \max_{\theta\in \R^n}\; \E_{\tau\sim \rho_\theta}\left[\cR(\tau)\right] - \cG(\theta),
    \]
    which was considered in \cite{pham2020hybrid}. However, it is clear that $\cR(\tau)$ does not depend on $\theta$. Hence, the above regularized MDP is a special case of the proposed regularized reward optimization problem \eqref{eq-max-reward}. 
    
    One can check that the gradient $\nabla_\theta \cJ(\theta)$ has the following form \cite{yuan2022general}:
    \[
        \nabla_\theta \cJ(\theta) 
        = \E_{\tau\sim \rho_\theta}\left[\sum_{t = 0}^{H-1} \gamma^t R(s_t,a_t)\sum_{t' = 0}^t \nabla_\theta\log\tilde \pi_\theta(a_{t'}|s_{t'})\right].
    \]
\end{example}

Being a composite optimization problem, problem \eqref{eq-max-reward} admits the following first-order stationary condition 
\begin{equation}
	0\in -\nabla_\theta \cJ(\theta) + \partial \cG(\theta).
	\label{eq-stationary-point}
\end{equation} 
Here, $\partial \cG(\cdot)$ denotes the subdifferential of the proper closed and convex function $\cG(\cdot)$ which is defined as 
\[
    \partial\cG(\theta):=\left\{g\in\R^n\;:\; \cG(\theta')\geq \cG(\theta) + \inprod{g}{\theta' - \theta},\; \forall \theta \right\}.
\]
It is well-known that $\partial\cG(\theta)$ is a nonempty closed convex subset of $\R^n$ for any $\theta\in\R^n$ such that $\cG(\theta)<\infty$ (see e.g., \cite{rockafellar1997convex}). Note that any optimal  solution of problem \eqref{eq-max-reward} satisfies the condition \eqref{eq-stationary-point}, while the reverse statement is generally not valid for nonconcave problems, including the problem \eqref{eq-max-reward}. The condition \eqref{eq-stationary-point} leads to the following concept of stationary points for problem \eqref{eq-max-reward}. 

\begin{definition}
	\label{def-stationary}
	A point $\theta\in \R^n$ is called a stationary point for problem \eqref{eq-max-reward} if it satisfies the condition \eqref{eq-stationary-point}. Given a tolerance $\epsilon > 0$, a stochastic optimization method attains an (expected) $\epsilon$-stationary point, denoted as $\theta\in \R^n$, if 
    \[
        \E_T\left[\mathrm{dist}\left(0, -\nabla_\theta \cJ(\theta) + \partial\cG(\theta)\right)^2\right]\leq \epsilon^2,
    \]
    where the expectation is taken with respect to all the randomness caused by the algorithm, after running it $T$ iterations, and $\mathrm{dist}(x,\mathcal{C})$ denotes the distance between a point $x$ and a closed convex set $\mathcal{C}$.
\end{definition}

\begin{remark}[Gradient mapping]\label{remark-gradient-mapping}
Note that the optimality condition \eqref{eq-stationary-point} can be rewritten as 
\[
    0 = G_\eta(\theta):= \frac{1}{\eta}\left[\Prox_{\eta\cG}\left(\theta + \eta \nabla_\theta\cJ(\theta)\right) - \theta\right],
\]
for some $\eta > 0$, where 
\[
    \Prox_{\eta\cG}(\theta):=\mathrm{argmin}_{\theta'}\left\{ \cG(\theta') + \frac{1}{2\eta}\norm{\theta' - \theta}^2\right\}
\]
denotes the proximal mapping of the function $\cG(\cdot)$. The mapping $G_\eta(\cdot)$ is called the gradient mapping in the field of optimization \cite{beck2017first}. It is easy to verify that if for a $\theta\in \R^n$, it holds that
\[
    \mathrm{dist}\left(0, -\nabla_\theta \cJ(\theta) + \partial\cG(\theta)\right) \leq \epsilon, 
\]
then there exists a vector $d$ satisfying $\norm{d}\leq \epsilon$ such that 
\[
    d + \nabla_\theta\cJ(\theta)\in \partial \cG(\theta),
\]
which is equivalent to saying that 
\[
    \theta = \Prox_{\eta\cG}\left(\eta d + \theta + \eta \nabla_\theta\cJ(\theta)\right).
\]
Moreover, we can verify that (by using the firm nonexpansiveness of $\Prox_{\eta\cG}(\cdot)$; see e.g., \cite{beck2017first})
\[
    \norm{G_\eta(\theta)} = \frac{1}{\eta}\norm{\Prox_{\eta\cG}\left(\theta + \eta \nabla_\theta\cJ(\theta)\right) - \theta} \leq \norm{d}\leq \epsilon.
\]
Therefore, we can also characterize an (expected) $\epsilon$-stationary point by using the following condition
\[
    \E_T\left[\norm{G_\eta(\theta)}^2\right]\leq \epsilon^2.
\]
\end{remark}

The main objective of this paper is to study the convergence properties, including iteration and sample complexities, of the stochastic (variance-reduced) proximal gradient method to a $\epsilon$-stationary point with a pre-specified $\epsilon > 0$. Note that all proofs of our results are presented in the appendix. Moreover, we acknowledge that our analysis is drawn upon classical results in the literature.

\section{The stochastic proximal gradient method} \label{section-spgd}

In this section, we present and analyze the stochastic proximal gradient method for solving the problem \eqref{eq-max-reward}. The fundamental idea of the algorithm is to replace the true gradient $\nabla_\theta\cJ(\theta)$, which are not available for most of the time, with a stochastic gradient estimator in the classical proximal gradient method \cite{beck2017first}. The method can be viewed as extensions to the projected policy gradient method with direct parameterization \cite{agarwal2021theory} and the stochastic policy gradient method for unregularized MDPs \cite{williams1992simple}. The detailed description of the algorithm is presented in Algorithm \ref{alg-pgd}. 

For notational simplicity, we denote
\[
    g(x,\theta):= \cR_\theta(x) \nabla_\theta\log\pi_\theta(x) + \nabla_\theta\cR_\theta(x).
\]

\begin{algorithm}[htb!]
	\begin{algorithmic}[1]
		\STATE \textbf{Input:} initial point $\theta^0$, sample size $N$ and the learning rate $\eta > 0$.
		\FOR{$t = 0,\dots,T-1$}
			\STATE Compute the stochastic gradient estimator: 
			\[
				g^t:= \frac{1}{N}\sum_{j = 1}^N g(x^{t,j},\theta^t),
			\]
            where $\{x^{t,1},\dots, x^{t,N}\}$ are sampled independently according to $\pi_{\theta^t}$.
			\STATE Update 
                \[
                    \theta^{t+1} = \Prox_{\eta\cG}\left(\theta^t + \eta g^t\right).
                \]
		\ENDFOR
		\STATE \textbf{Output:} $\hat\theta^T$ selected randomly from the generated sequence $\{\theta^t\}_{t = 1}^T$.
	\end{algorithmic}
	\caption{The stochastic proximal gradient method}
	\label{alg-pgd}
\end{algorithm}

From Algorithm \ref{alg-pgd}, we see that at each iteration, $N$ data points, namely $\{x^{t,1},\dots, x^{t,N}\}$, are sample according to the current probability distribution $\pi_{\theta^t}$. Using these data points, we can construct a REINFORCE-type stochastic gradient estimator $g^t$. Then, the algorithm just performs a proximal gradient ascent updating. Let $T>0$ be the maximal number of iterations, then a sequence $\{\theta^t\}_{t = 1}^T$ can be generated, and the output solution is selected randomly from this sequence. Next, we shall proceed to answer the questions that how to choose the learning rate $\eta > 0$, how large the sample size $N$ should be,  and how many iterations for the algorithm to output an $\epsilon$-stationary point for a given $\epsilon>0$, theoretically. The next lemma establishes the L-smoothness of $\mathcal{J}(\cdot)$ whose proof is given at Appendix \ref{proof-lemma-Lsmooth}.

\begin{lemma}
	\label{lemma-Lsmooth}
	Under Assumptions \ref{assumption-reward} and \ref{assumption-policy}, the gradient of $\cJ$ is $L$-smooth, i.e., 
	\[
		\norm{\nabla_\theta\cJ(\theta) - \nabla_\theta\cJ(\theta')}\leq L\norm{\theta - \theta'},\quad \forall\theta,\theta'\in \R^n,
	\]
	with $L:= U(C_g^2+C_h) + \widetilde{C}_h + 2C_g\widetilde{C}_g > 0$. 
\end{lemma}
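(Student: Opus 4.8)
The plan is to prove the stronger statement that $\cJ$ is twice continuously differentiable on $\R^n$ with $\norm{\nabla_\theta^2\cJ(\theta)}_2\le L$ for every $\theta$; since $\R^n$ is convex, integrating the Hessian along the segment joining $\theta$ and $\theta'$ (equivalently, the mean value inequality $\norm{\nabla_\theta\cJ(\theta)-\nabla_\theta\cJ(\theta')}\le \sup_t\norm{\nabla_\theta^2\cJ(\theta_t)}_2\,\norm{\theta-\theta'}$) then yields the claimed $L$-Lipschitz continuity of $\nabla_\theta\cJ$.

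First I would differentiate the integral representation $\nabla_\theta\cJ(\theta)=\int\big(\cR_\theta(x)\nabla_\theta\log\pi_\theta(x)+\nabla_\theta\cR_\theta(x)\big)\pi_\theta(x)\,\dd x$ once more under the integral sign, using the log-derivative identity $\nabla_\theta\pi_\theta(x)=\pi_\theta(x)\nabla_\theta\log\pi_\theta(x)$ and the product rule, to obtain
\[
    \nabla_\theta^2\cJ(\theta)=\E_{x\sim\pi_\theta}\Big[\cR_\theta(x)\big(\nabla_\theta^2\log\pi_\theta(x)+\nabla_\theta\log\pi_\theta(x)\nabla_\theta\log\pi_\theta(x)^\top\big)+\nabla_\theta^2\cR_\theta(x)+\nabla_\theta\log\pi_\theta(x)\nabla_\theta\cR_\theta(x)^\top+\nabla_\theta\cR_\theta(x)\nabla_\theta\log\pi_\theta(x)^\top\Big].
\]
Then I would bound the operator norm termwise, using the triangle inequality, submultiplicativity of $\norm{\cdot}_2$, the identity $\norm{uv^\top}_2=\norm{u}\,\norm{v}$, and the uniform bounds of Assumptions \ref{assumption-reward} and \ref{assumption-policy}: the $\cR_\theta$-weighted group contributes at most $U(C_h+C_g^2)$, the term $\nabla_\theta^2\cR_\theta(x)$ contributes at most $\widetilde C_h$, and each of the two rank-one cross terms contributes at most $C_g\widetilde C_g$. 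Summing and using Jensen's inequality for the expectation gives $\norm{\nabla_\theta^2\cJ(\theta)}_2\le U(C_g^2+C_h)+\widetilde C_h+2C_g\widetilde C_g=L$, as required.

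The step needing the most care is justifying the two interchanges of differentiation and integration (and, implicitly, that $\pi_\theta$ is a positive density so that $\log\pi_\theta$ and its first and second derivatives are well-defined). This is where Assumptions \ref{assumption-reward} and \ref{assumption-policy} do the work: the first-derivative integrand has norm at most $(UC_g+\widetilde C_g)\pi_\theta(x)$ and the second-derivative integrand has norm at most $\big(U(C_h+C_g^2)+\widetilde C_h+2C_g\widetilde C_g\big)\pi_\theta(x)$, so together with the standing regularity/integrability of the family $\{\pi_\theta\}$ a dominated-convergence argument licenses differentiating under the integral and also gives continuity of $\nabla_\theta^2\cJ$. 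An alternative route bounds $\norm{\nabla_\theta\cJ(\theta)-\nabla_\theta\cJ(\theta')}$ directly by writing both gradients as integrals against a common reference measure, but the dependence of the sampling distribution $\pi_\theta$ on $\theta$ makes the bookkeeping heavier, so I would prefer the Hessian-bound argument above.
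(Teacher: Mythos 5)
Your proposal is correct and follows essentially the same route as the paper: compute $\nabla_\theta^2\cJ(\theta)$ by differentiating under the integral with the log-derivative trick, bound its spectral norm termwise by $U(C_g^2+C_h)+\widetilde{C}_h+2C_g\widetilde{C}_g$ using the uniform bounds in Assumptions \ref{assumption-reward} and \ref{assumption-policy}, and conclude via the mean value inequality. Your version is in fact slightly more careful than the paper's (you write the cross term in its symmetric form and explicitly flag the dominated-convergence justification for the interchange of differentiation and integration, which the paper leaves implicit), but the decomposition and the resulting constant are identical.
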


\begin{remark}[L-smoothness in MDPs]
    For an MDP with finite action space and state space as in Example \ref{eg-mdp}, the Lipschitz constant of $\nabla_\theta\cJ(\cdot)$ can be expressed in terms of $\abs{A}$, $\abs{S}$ and $\gamma$. We refer the reader to \cite{agarwal2021theory,xiao2022convergence} for more details.
\end{remark}

As a consequence of the L-smoothness of the function $\cJ(\cdot)$, we next show that the learning rate can be chosen as a positive constant upper bounded by a constant depends only on the Lipschitz constant of $\nabla_\theta\cJ(\cdot)$. For notational complicity, we denote $\Delta:=\cF^* - \cF(\theta^0) > 0$ for the rest of this paper.

\begin{theorem}
	\label{theorem-convergence}
	Under Assumptions \ref{assumption-reward} and \ref{assumption-policy}, if we set $\eta \in \left(0, \frac{1}{2L}\right)$, then Algorithm \ref{alg-pgd} outputs a point $\hat \theta ^T$ satisfying
     \begin{equation}
         \label{eq-thm-conv-1}
         \begin{aligned}
    		&\; \E_{T}\left[\mathrm{dist}\left(0, -\nabla_\theta\cJ(\hat\theta^T) + \partial \cG(\hat\theta^T)\right)^2\right] \\
            \leq &\;  \left(2 + \frac{2}{\eta L(1-2\eta L)}\right) \frac{1}{T}\sum_{t = 0}^{T-1}\E_{T}\left[\norm{g^t - \nabla_\theta\cJ(\theta^t)}^2\right] \notag  + \frac{\Delta}{T}\left(\frac{2}{\eta} + \frac{4}{\eta(1-2\eta L)}\right), 
    	\end{aligned}
     \end{equation}
    where $\E_T$ is defined in Definition \ref{def-stationary}.
\end{theorem}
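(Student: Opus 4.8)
The plan is to run the standard ``sufficient‑ascent plus gradient‑mapping'' argument for proximal stochastic methods, but keeping the gradient‑estimation error $\norm{g^t-\nabla_\theta\cJ(\theta^t)}^2$ as an unevaluated term on the right‑hand side (its magnitude is controlled separately, by a variance bound, in the sample‑complexity corollary). Write $r_t:=\norm{\theta^{t+1}-\theta^t}$ and $e_t:=\norm{g^t-\nabla_\theta\cJ(\theta^t)}$. First I would extract a subgradient from the update. Since $\theta^{t+1}=\Prox_{\eta\cG}(\theta^t+\eta g^t)$, optimality of the proximal subproblem gives $g^t-\tfrac1\eta(\theta^{t+1}-\theta^t)\in\partial\cG(\theta^{t+1})$, hence
\[
-\nabla_\theta\cJ(\theta^{t+1})+g^t-\tfrac1\eta(\theta^{t+1}-\theta^t)\in -\nabla_\theta\cJ(\theta^{t+1})+\partial\cG(\theta^{t+1}).
\]
Taking norms, adding and subtracting $\nabla_\theta\cJ(\theta^t)$, bounding $\norm{\nabla_\theta\cJ(\theta^{t+1})-\nabla_\theta\cJ(\theta^t)}\le L r_t$ by the $L$‑smoothness of $\cJ$ from Lemma~\ref{lemma-Lsmooth}, and splitting via Young's inequality, I get a pointwise estimate of the form
\[
\mathrm{dist}\!\left(0,-\nabla_\theta\cJ(\theta^{t+1})+\partial\cG(\theta^{t+1})\right)^2\le c_1\big(L+\tfrac1\eta\big)^2 r_t^2 + c_2\, e_t^2
\]
for suitable absolute/tunable constants $c_1,c_2$.

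Next I would establish a one‑step ascent inequality for $\cF=\cJ-\cG$. Combining the descent lemma for the $L$‑smooth map $\cJ$, $\cJ(\theta^{t+1})\ge \cJ(\theta^t)+\inprod{\nabla_\theta\cJ(\theta^t)}{\theta^{t+1}-\theta^t}-\tfrac{L}{2}r_t^2$, with the $\tfrac1\eta$‑strong convexity of the proximal objective $\theta'\mapsto \cG(\theta')+\tfrac1{2\eta}\norm{\theta'-\theta^t}^2-\inprod{g^t}{\theta'-\theta^t}$ evaluated at its minimizer $\theta^{t+1}$ and at $\theta^t$, which yields $\inprod{g^t}{\theta^{t+1}-\theta^t}\ge \cG(\theta^{t+1})-\cG(\theta^t)+\tfrac1\eta r_t^2$, I obtain
\[
\cF(\theta^{t+1})\ge \cF(\theta^t)+\big(\tfrac1\eta-\tfrac{L}{2}\big)r_t^2+\inprod{\nabla_\theta\cJ(\theta^t)-g^t}{\theta^{t+1}-\theta^t}.
\]
I would then lower bound the last inner product by $-\tfrac{1}{2\lambda}e_t^2-\tfrac{\lambda}{2}r_t^2$ and pick $\lambda$ so that the coefficient of $r_t^2$ becomes the positive constant $\tfrac{1-2\eta L}{2\eta}$; this is precisely where the hypothesis $\eta<\tfrac1{2L}$ is used and where the factor $1-2\eta L$ enters. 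Telescoping from $t=0$ to $T-1$ and using $\cF(\theta^T)\le\cF^*$, i.e. $\cF^*-\cF(\theta^0)=\Delta$, gives a bound of the form $\sum_{t=0}^{T-1}r_t^2\le \tfrac{2\eta\Delta}{1-2\eta L}+c_3\,\eta^2\sum_{t=0}^{T-1}e_t^2$.

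Finally I would substitute this into the pointwise $\mathrm{dist}^2$ estimate, sum over $t$, divide by $T$, take expectation over all the randomness generated by the algorithm, and use that $\hat\theta^T$ is drawn uniformly from $\{\theta^{t+1}\}_{t=0}^{T-1}$, so that $\E_T[\mathrm{dist}(0,-\nabla_\theta\cJ(\hat\theta^T)+\partial\cG(\hat\theta^T))^2]=\tfrac1T\sum_{t=0}^{T-1}\E_T[\mathrm{dist}(0,-\nabla_\theta\cJ(\theta^{t+1})+\partial\cG(\theta^{t+1}))^2]$; collecting the constants produces \eqref{eq-thm-conv-1}.

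The main obstacle is bookkeeping rather than anything conceptual: the iterate $\theta^{t+1}$ depends on $g^t$, so one cannot take a conditional expectation to eliminate the cross term $\inprod{\nabla_\theta\cJ(\theta^t)-g^t}{\theta^{t+1}-\theta^t}$, and it must be handled deterministically by Young's inequality before any expectation is taken; moreover the Young parameters (both in the split of $\mathrm{dist}^2$ and in the ascent step) must be tuned so that the $r_t^2$‑coefficient in the telescoped inequality is positive exactly under $\eta<\tfrac1{2L}$ and so that the accumulated constants coincide with those in \eqref{eq-thm-conv-1}. A secondary point requiring care is the indexing, since the returned point is chosen among $\theta^1,\dots,\theta^T$, which matches the range $t=0,\dots,T-1$ of the telescoping sum.
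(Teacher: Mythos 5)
Your proposal follows the same overall strategy as the paper's proof: extract the subgradient element $g^t-\tfrac1\eta(\theta^{t+1}-\theta^t)\in\partial\cG(\theta^{t+1})$ from the prox update, combine the descent lemma for $\cJ$ with the prox inequality to get the one-step ascent bound, absorb the cross term $\inprod{g^t-\nabla_\theta\cJ(\theta^t)}{\theta^{t+1}-\theta^t}$ by Young's inequality so that the coefficient of $\norm{\theta^{t+1}-\theta^t}^2$ becomes $\tfrac{1-2\eta L}{2\eta}$, telescope against $\Delta$, and average over the uniformly selected output. All of that is correct, and your observation that the cross term must be handled deterministically (since $\theta^{t+1}$ depends on $g^t$) is exactly right. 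The one place you genuinely diverge is the pointwise bound on $\mathrm{dist}(0,-\nabla_\theta\cJ(\theta^{t+1})+\partial\cG(\theta^{t+1}))^2$. You bound $\norm{\nabla_\theta\cJ(\theta^{t+1})-g^t+\tfrac1\eta(\theta^{t+1}-\theta^t)}^2$ by a plain triangle-inequality/Young split, giving $c_1(L+\tfrac1\eta)^2 r_t^2+c_2 e_t^2$, and then substitute the telescoped bound on $\sum_t r_t^2$. The paper instead writes $2\inprod{\nabla_\theta\cJ(\theta^{t+1})-g^t}{\tfrac1\eta(\theta^{t+1}-\theta^t)}=\norm{a+b}^2-\norm{a}^2-\norm{b}^2$ and feeds the one-step inequality back in, so that the increment $\tfrac2\eta(\cF(\theta^{t+1})-\cF(\theta^t))$ appears \emph{inside} the distance bound and telescopes there directly; the coefficient multiplying $r_t^2$ is then only $\tfrac{2}{\eta^2}$ rather than $2(L+\tfrac1\eta)^2$. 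Both routes yield a bound of the form $O(1)\cdot\tfrac1T\sum_t\E_T[e_t^2]+O(\Delta/(\eta T))$ and hence the same complexity conclusions, but your split cannot be tuned to reproduce the exact coefficients $2+\tfrac{2}{\eta L(1-2\eta L)}$ and $\tfrac2\eta+\tfrac{4}{\eta(1-2\eta L)}$ stated in \eqref{eq-thm-conv-1} (for instance, getting the factor $2$ on $e_t^2$ forces the Young parameter to $1$, and then the $r_t^2$ coefficient strictly exceeds $\tfrac{2}{\eta^2}$). So your argument proves the theorem up to modified absolute constants; to land on the stated constants you would need the paper's completing-the-square step.
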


The proof of the above theorem is provided in Appendix \ref{proof-theorem-convergence}. From this theorem, if one sets $g^t = \nabla_\theta\cJ(\theta^t)$, i.e., $\norm{g^t - \nabla_\theta\cJ(\theta^t)}^2 = 0$, then there is no randomness along the iterations and the convergence property is reduced to 
\[
    \min_{1\leq t\leq T}\mathrm{dist}\left(0, -\nabla_\theta\cJ(\hat\theta^t) + \partial \cG(\hat\theta^t)\right) = O\left(\frac{1}{\sqrt{T}}\right),
\]
which is implied by classical results on proximal gradient method (see e.g., \cite{beck2017first}). However, since the exact full gradient $\nabla_\theta\cJ(\theta)$ is rarely computable, it is common to require the variance (i.e., the trace of the covariance matrix) of the stochastic estimator to be bounded. The latter condition plays an essential role in analyzing stochastic first-order methods for solving nonconvex optimization problems, including RL applications; see, e.g., \cite{beck2017first,papini2018stochastic,shen2019hessian,lan2020first,yang2022policy}.

\begin{lemma}
	\label{lemma-bdd-var}
	Under Assumptions \ref{assumption-reward} and \ref{assumption-policy}, there exists a constant $\sigma > 0$ such that for any $\theta$, 
	\[
		\E_{x\sim\pi_\theta}\left[\norm{g(x,\theta) - \nabla_\theta\cJ(\theta)}^2\right] \leq \sigma^2.
	\]
\end{lemma}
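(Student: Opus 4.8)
The plan is to obtain a deterministic (pointwise) uniform bound on $\norm{g(x,\theta)}$ and then exploit the elementary fact that the variance of a random vector never exceeds its second moment. First I would recall that, as derived just before Example~\ref{eg-mdp}, $\nabla_\theta\cJ(\theta)=\E_{x\sim\pi_\theta}[g(x,\theta)]$, so for each fixed $\theta$ the quantity $g(x,\theta)-\nabla_\theta\cJ(\theta)$ is precisely the centered version of $g(x,\theta)$ under $x\sim\pi_\theta$. Consequently
\[
    \E_{x\sim\pi_\theta}\left[\norm{g(x,\theta)-\nabla_\theta\cJ(\theta)}^2\right] = \E_{x\sim\pi_\theta}\left[\norm{g(x,\theta)}^2\right] - \norm{\nabla_\theta\cJ(\theta)}^2 \leq \E_{x\sim\pi_\theta}\left[\norm{g(x,\theta)}^2\right],
\]
so it suffices to bound the second moment of $g(x,\theta)$, uniformly in $\theta$.

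Next I would apply the triangle inequality to the definition $g(x,\theta)=\cR_\theta(x)\nabla_\theta\log\pi_\theta(x)+\nabla_\theta\cR_\theta(x)$ and invoke the three uniform bounds supplied by the assumptions: $\abs{\cR_\theta(x)}\leq U$ from Assumption~\ref{assumption-reward}(1), $\norm{\nabla_\theta\log\pi_\theta(x)}\leq C_g$ from Assumption~\ref{assumption-policy}, and $\norm{\nabla_\theta\cR_\theta(x)}\leq \widetilde{C}_g$ from Assumption~\ref{assumption-reward}(2). These give the pointwise estimate $\norm{g(x,\theta)}\leq U C_g + \widetilde{C}_g$ valid for all $x\in\R^d$ and $\theta\in\R^n$. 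Taking expectations yields $\E_{x\sim\pi_\theta}\left[\norm{g(x,\theta)}^2\right]\leq (UC_g+\widetilde{C}_g)^2$, and combining this with the previous display establishes the lemma with $\sigma := UC_g+\widetilde{C}_g$.

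I do not anticipate any real obstacle: the statement is essentially a direct consequence of the uniform boundedness hypotheses, and the argument is just a triangle inequality followed by substitution of the assumed constants. The only point meriting (minor) care is to use the identity ``variance $=$ second moment $-$ squared mean'' rather than the cruder bound $\norm{a-b}^2\leq 2\norm{a}^2+2\norm{b}^2$ applied to $a=g(x,\theta)$, $b=\nabla_\theta\cJ(\theta)$; the latter would still work but would inflate $\sigma^2$ by a factor of $2$, whereas the former keeps the constant clean. (As a byproduct, the same uniform bound re-confirms that $\nabla_\theta\cJ(\theta)$ is well-defined with $\norm{\nabla_\theta\cJ(\theta)}\leq UC_g+\widetilde{C}_g$.)
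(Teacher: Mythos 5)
Your proposal is correct and follows essentially the same route as the paper: both reduce the variance to the second moment via $\E\left[\norm{X-\E[X]}^2\right]\leq\E\left[\norm{X}^2\right]$ and then bound $\norm{g(x,\theta)}$ uniformly using Assumptions \ref{assumption-reward} and \ref{assumption-policy}. The only cosmetic difference is that you apply the triangle inequality before squaring (yielding $\sigma^2=(UC_g+\widetilde{C}_g)^2$) while the paper squares first via $\norm{a+b}^2\leq 2\norm{a}^2+2\norm{b}^2$ (yielding $\sigma^2=2U^2C_g^2+2\widetilde{C}_g^2$), so your constant is in fact marginally tighter.
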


The proof of Lemma \ref{lemma-bdd-var} is given in Appendix \ref{proof-lemma-bdd-var}. By choosing a suitable sample size $N$, we can rely on Lemma \ref{lemma-bdd-var} to make the term $\E_{T}\left[\norm{g^t - \nabla_\theta\cJ(\theta^t)}^2\right]$ in Theorem \ref{theorem-convergence} small, for every $t\leq T$. Then, Theorem \ref{theorem-convergence} implies that Algorithm \ref{alg-pgd} admits an expected $O(T^{-1})$ convergence rate to a stationary point. These results are summarized in the following theorem; see Appendix \ref{proof-theorem-conv-pgd} for a proof.
\begin{theorem}
	\label{theorem-conv-pgd}
	Suppose that Assumptions \ref{assumption-reward} and \ref{assumption-policy} hold. Let $\epsilon > 0$ be a given accuracy. Running the Algorithm \ref{alg-pgd} for 
    \[
        T := \left\lceil \frac{ \Delta}{\epsilon^2}\left(\frac{4}{\eta} + \frac{8}{\eta(1-2\eta L)}\right)\right\rceil = O(\epsilon^{-2})
    \]
    iterations with the learning rate $\eta <\frac{1}{2L}$ and the sample size 
    \[
        N := \left\lceil\frac{\sigma^2}{\epsilon^2}\left(4 + \frac{4}{\eta L(1-2\eta L)}\right)\right\rceil = O(\epsilon^{-2})
    \]
    outputs a point $\hat\theta^T$ satisfying
	\[
		\E_{T}\left[\mathrm{dist}\left(0, -\nabla_\theta\cJ(\hat\theta^T) + \partial\cG(\hat \theta^T)\right)^2\right] \leq \epsilon^2.
	\]
    Moreover, the sample complexity is $O(\epsilon^{-4})$.
\end{theorem}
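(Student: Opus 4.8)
The plan is to combine Theorem~\ref{theorem-convergence} with Lemma~\ref{lemma-bdd-var}, treating the two error terms in the bound \eqref{eq-thm-conv-1} separately and forcing each below $\epsilon^2/2$.

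First I would control the variance term. By Lemma~\ref{lemma-bdd-var}, $\E_{x\sim\pi_\theta}[\norm{g(x,\theta)-\nabla_\theta\cJ(\theta)}^2]\leq\sigma^2$ for every $\theta$. Since $g^t=\frac{1}{N}\sum_{j=1}^N g(x^{t,j},\theta^t)$ with $\{x^{t,j}\}_{j=1}^N$ drawn i.i.d.\ from $\pi_{\theta^t}$ \emph{conditionally on} $\theta^t$, a standard computation on the variance of an average of i.i.d.\ unbiased estimators gives $\E[\norm{g^t-\nabla_\theta\cJ(\theta^t)}^2\mid\theta^t]\leq\sigma^2/N$; taking full expectation yields $\E_T[\norm{g^t-\nabla_\theta\cJ(\theta^t)}^2]\leq\sigma^2/N$ for each $t$, hence the same bound for the running average $\frac{1}{T}\sum_{t=0}^{T-1}\E_T[\norm{g^t-\nabla_\theta\cJ(\theta^t)}^2]$. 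Plugging this into \eqref{eq-thm-conv-1}, the first term is at most $\left(2+\frac{2}{\eta L(1-2\eta L)}\right)\frac{\sigma^2}{N}$. Choosing $N:=\left\lceil\frac{\sigma^2}{\epsilon^2}\left(4+\frac{4}{\eta L(1-2\eta L)}\right)\right\rceil$ makes this term $\leq\epsilon^2/2$ (the factor $2$ in $N$ absorbing the ceiling and leaving room).

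Next I would handle the optimization term $\frac{\Delta}{T}\left(\frac{2}{\eta}+\frac{4}{\eta(1-2\eta L)}\right)$ in \eqref{eq-thm-conv-1}: it is $O(1/T)$, so choosing $T:=\left\lceil\frac{\Delta}{\epsilon^2}\left(\frac{4}{\eta}+\frac{8}{\eta(1-2\eta L)}\right)\right\rceil$ forces it $\leq\epsilon^2/2$. Adding the two halves gives $\E_T[\mathrm{dist}(0,-\nabla_\theta\cJ(\hat\theta^T)+\partial\cG(\hat\theta^T))^2]\leq\epsilon^2$, which is exactly the claimed $\epsilon$-stationarity from Definition~\ref{def-stationary}. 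Finally, both $T=O(\epsilon^{-2})$ and $N=O(\epsilon^{-2})$ (with $\eta$ a fixed constant depending only on $L$), so the total number of sampled points $T\cdot N=O(\epsilon^{-4})$, establishing the sample complexity.

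I do not expect any genuine obstacle here: the theorem is essentially a bookkeeping corollary of the two preceding results. The only point demanding a little care is the variance reduction step for the minibatch estimator — one must invoke conditional independence given $\theta^t$ correctly, since $\theta^t$ itself is random and depends on earlier samples, so the clean $\sigma^2/N$ bound is a statement about conditional variance that survives taking the outer expectation. Beyond that, it is just matching constants so that each error term lands at $\epsilon^2/2$ and verifying the ceilings do not spoil the $O(\epsilon^{-2})$ scaling.
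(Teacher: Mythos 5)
Your proposal is correct and follows essentially the same route as the paper: split the bound of Theorem \ref{theorem-convergence} into two halves of $\epsilon^2/2$, use the unbiasedness and conditional independence of the minibatch samples together with Lemma \ref{lemma-bdd-var} to get the $\sigma^2/N$ variance bound, and then read off the stated $N$, $T$, and the $O(\epsilon^{-4})$ product. Your explicit remark about conditioning on $\theta^t$ before averaging is a point the paper passes over silently, but it is the same argument.
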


As already mentioned in the introduction, the total sample complexity of Algorithm \ref{alg-pgd} to an $\epsilon$-stationary point is shown to be $O(\epsilon^{-4})$, which matches the most competitive sample complexity of the classical stochastic policy gradient for MDPs \cite{williams1992simple,baxter2001infinite,zhang2020global,xiong2021non,yuan2022general}. 

\begin{remark}[Sample size]\label{remark-sample-size}
    Note that the current state-of-the-art iteration complexity for the (small-batch) stochastic gradient descent method is $T:=O(\epsilon^{-2})$ with $\eta_t:=\min\{O(L^{-1}), O(T^{-1/2})\}$; see, e.g., \cite{ghadimi2013stochastic}. The reason for requiring larger batch-size in Theorem \ref{theorem-conv-pgd} is to allow a constant learning rate.  To the best of our knowledge, to get the same convergence properties as Theorem \ref{theorem-conv-pgd} under the same conditions for problem \eqref{eq-max-reward}, the large batch-size is required.
\end{remark}

\begin{remark}[Global convergence]\label{remark-global-convergence}
    As mentioned in introduction, some recent progress has been made for analyzing the global convergence properties of the policy gradient methods for MDPs, which greatly rely on the concepts of gradient domination and its extensions \cite{agarwal2021theory,mei2020global,xiao2022convergence,yuan2022general,gargiani2022page}. This concept is also highly related to the classical P\L-condition \cite{polyak1963gradient} and K\L-condition \cite{bolte2007lojasiewicz} in the field of optimization. One of the key ideas is to assume or verify that the difference between the optimal objective function value, namely $\cF^*$, and $\cF(\theta)$ can be bounded by the quantity depending on the norm of the gradient mapping at an arbitrary point. In particular, suppose that there exists a positive constant $\omega$ such that
    \[
        \norm{G_\eta(\theta))} \geq 2\sqrt{\omega}\left(\cF^* - \cF(\theta)\right),\quad \forall\;\theta\in\R^n,
    \]
    where $G_\eta$ is defined in Remark \ref{remark-gradient-mapping} (see e.g., \cite{xiao2022convergence}). Then, after running Algorithm \ref{alg-page} for $T = O(\epsilon^{-2})$ iterations, one can easily check that 
    \[
        \E_T\left[\cF^* - \cF(\hat \theta^T)\right] \leq \frac{1}{2\sqrt{\omega}}\epsilon.
    \]
    As a conclusion, by assuming or verifying stronger conditions, one can typically show that any stationary point of the problem \eqref{eq-max-reward} is also a globally optimal solution. This shares the same spirit of \cite{zhang2020variational} for MDPs with general utilities. We leave it as a future research to analyze the global convergence of the problem \eqref{eq-max-reward}.
\end{remark}

\section{Variance reduction via PAGE}\label{section-page}

Recall from Theorem \ref{theorem-convergence} that, there is a trade-off between the sample complexity and the iteration complexity of Algorithm \ref{alg-pgd}. In particular, while there is little room for us to improve the term $\frac{\Delta}{T}\left(\frac{2}{\eta} + \frac{4}{\eta(1-2\eta L)}\right)$ which corresponds to the iteration complexity, it is possible to construct $g^t$ in an advanced manner to improve the sample complexity. Therefore, our main goal in this section is to reduce the expected sample complexity while keeping the term $\displaystyle \frac{1}{T}\sum_{t = 0}^{T-1}\E_{T}\left[\norm{\nabla_\theta\cJ(\theta^{t}) - g^t}^2\right]$ small. We achieve this goal by considering the stochastic variance-reduced gradient methods that have recently attracted much attention. Among these variance-reduced methods, as argued in \cite{gargiani2022page}, the ProbAbilistic Gradient Estimator (PAGE) proposed in \cite{li2021page} has a simple structure, and can lead to optimal convergence properties. These appealing features make it attractive in machine learning applications. Therefore, in this section, we also consider the stochastic variance-reduced proximal gradient method with PAGE for solving the problem \eqref{eq-max-reward}. 

PAGE is originally designed for the stochastic nonconvex minimization in the oblivious setting:
\[
    \min_{\theta \in \mathbb{R}^n}\; f(\theta):=\mathbb{E}_{x\sim \pi}[F(x,\theta)]
\]
where $\pi$ is a fixed probability distribution and $F:\mathbb{R}^d\times \mathbb{R}^n\to \mathbb{R}$ is a certain differentiable (and possibly nonconvex) loss function. For stochastic gradient-type methods, a certain stochastic gradient estimator for $f$ is required for performing the optimization. At the $t$-th iteration, given a probability $p_t\in [0,1]$ and the current gradient estimator $g^t$, PAGE proposed to replace the vanilla mini-batch gradient estimator with the following unbiased stochastic estimator:
\[
    \nabla f(\theta^{t+1}) \approx g^{t+1} := \begin{cases}
        \displaystyle \frac{1}{N_1} \sum_{j=1}^{N_1}\nabla_\theta F(x^j, \theta^{t+1}), & \textrm{with probability $p_t$}, \\
        \displaystyle g^t + \frac{1}{N_2}\left(\sum_{j=1}^{N_2}\nabla_\theta F(x^j, \theta^{t+1}) - \sum_{j=1}^{N_2}\nabla_\theta F(x^j, \theta^{t})\right), & \textrm{with probability $1-p_t$},
    \end{cases}
\]
where $\{x^j\}$ are sampled from $\pi$, $N_1, N_2$ denote the sample sizes. Some key advantages of applying PAGE are summarized as follows. First, the algorithm is single-looped, which admit simpler implementation compared with existing double-looped variance reduced methods. Second, the probability $p_t$ can be adjusted dynamically, leading to more flexibilities. Third, one can choose $N_2$ to be much smaller than $N_1$ to guarantee the same iteration complexity as the vanilla SGD. Thus, the overall sample complexity can be significantly reduced. However, the application of PAGE to our setting needs significant modifications and extensions, which we shall demonstrate below. To the best of our knowledge, the application of PAGE for solving the general regularized reward optimization problem in the non-oblivious setting considered in this paper is new.

For notational simplicity, for the rest of this section, we denote
\[
    g_w(x,\theta,\theta') = \frac{\pi_{\theta}(x)}{\pi_{\theta'}(x)} g(x,\theta),
\]
for $\theta,\theta'\in\R^n,\;x\in \R^d$, where $\frac{\pi_{\theta}(x)}{\pi_{\theta'}(x)}$ denotes the importance weight between $\pi_{\theta}$ and $\pi_{\theta'}$. Note also that 
$$\E_{x\sim\pi_{\theta'}}\left[\frac{\pi_{\theta}(x)}{\pi_{\theta'}(x)}\right] = 1.$$ 
The description of the proposed PAGE variance-reduced stochastic proximal gradient method is given in Algorithm \ref{alg-page}. 

\begin{algorithm}[htb!]
	\begin{algorithmic}[1]
		\STATE \textbf{Input:} initial point $\theta^0$, sample sizes $N_1$ and $N_2$, a probability $p \in (0,1]$, and the learning rate $\eta > 0$.
		\STATE Compute 
        \[
            \displaystyle g^0:= \frac{1}{N_1}\sum_{j = 1}^{N_1}g(x^{0,j},\theta^0),
        \]
        where $\{x^{0,j}\}_j$ are sampled independently according to $\pi_{\theta^0}$.
		\FOR{$t = 0,\dots,T-1$}
			\STATE Update 
                \[
                    \theta^{t+1} = \Prox_{\eta\cG}\left(\theta^t + \eta g^t\right).
                \]
			\STATE Compute
			\[
                g^{t+1} = \left\{
                \begin{array}{ll} 
                \displaystyle \frac{1}{N_1}\sum_{j = 1}^{N_1}g(x^{t+1,j},\theta^{t+1}), & \textrm{with probability $p$}, \\[5pt] \displaystyle \frac{1}{N_2}\sum_{j = 1}^{N_2} g(x^{t+1,j},\theta^{t+1}) - \frac{1}{N_2}\sum_{j = 1}^{N_2}g_w(x^{t+1,j},\theta^t,\theta^{t+1})   + g^t, & \textrm{with probability $1-p$},
                \end{array}\right.
            \]
            where $\{x^{t+1,j}\}_j$ are sampled independently according to $\pi_{\theta^{t+1}}$.
		\ENDFOR
		\STATE \textbf{Output:} $\hat\theta^T$ selected randomly from the generated sequence $\{\theta^t\}_{t = 1}^T$.
	\end{algorithmic}
	\caption{The variance-reduced stochastic proximal gradient method with PAGE}
	\label{alg-page}
\end{algorithm}

It is clear that the only difference between Algorithm \ref{alg-pgd} and Algorithm \ref{alg-page} is the choice of the gradient estimator. At each iteration of the latter algorithm, we have two choices for the gradient estimator, where, with probability $p$, one chooses the same estimator as in Algorithm \ref{alg-pgd} with a sample size $N_1$, and with probability $1-p$, one constructs the estimator in a clever way which combines the information of the current iterate and the previous one. Since the data set $\{x^{t+1,1},\dots, x^{t+1,N_2}\}$ is sampled according to the current probability distribution $\pi_{\theta^{t+1}}$, we need to rely on the importance weight between $\theta^{t}$ and $\theta^{t+1}$ and construct the gradient estimator $\displaystyle\frac{1}{N_2}\sum_{j = 1}^{N_2} g_w(x^{t+1},\theta^t,\theta^{t+1})$, which is an unbiased estimator for $\nabla_\theta\cJ(\theta^t)$, so that $g^{t+1}$ becomes an unbiased estimator of $\nabla_\theta\cJ(\theta^{t+1})$. Indeed, one can easily verify that for any $\theta,\theta'\in \R^n$, it holds that 
\begin{equation}
    \E_{x\sim\pi_{\theta'}}	\left[g_w(x,\theta,\theta')\right] = \nabla_\theta\cJ(\theta),
    \label{eq-unbiased-gw}
\end{equation}
i.e., $g(x,\theta,\theta')$ is an unbiased estimator for $\nabla_\theta\cJ(\theta)$
provided that $x\sim \pi_{\theta'}$.

Next, we shall analyze the convergence properties of Algorithm \ref{alg-page}. Our analysis relies on the following assumption on the importance weight, which essentially controls the change of the distributions. 
\begin{assumption}
	\label{assumption-importance-weight}
	Let $\theta,\theta'\in\R^n$, the importance weight between $\pi_{\theta}$ and $\pi_{\theta'}$ is well-defined and there exists a constant $C_w > 0$ such that 
	\[
		\E_{x\sim\pi_{\theta'}}\left[\left(\frac{\pi_{\theta}(x)}{\pi_{\theta'}(x)} - 1\right)^2\right] \leq C_w^2.
	\]
\end{assumption}

Clearly, the significance of the constant $C_w$ (if exists) may depend sensitively on $\theta$ and $\theta'$. To see this, let us assume that for any $\theta\in\mathbb{R}^n$, $\pi_\theta = \theta$ is a discrete distribution over a set of finite points $\{x_k\}_{k=1}^n$ for which $\pi_\theta(x_k) = \theta_k > 0$ for all $k = 1,\dots, n$. Now, suppose that $\theta = \theta' + \Delta \theta$ with $|\Delta\theta_k|\leq 1$. Then, a simple calculation shows that 
\[
    \E_{x\sim\pi_{\theta'}}\left[\left(\frac{\pi_{\theta}(x)}{\pi_{\theta'}(x)} - 1\right)^2\right] = \sum_{k=1}^n\left(\frac{\theta_k}{\theta_k'}-1\right)^2\theta'_k =  \sum_{k = 1}^n\frac{\theta_k \Delta \theta_k}{\theta'_k} \leq \sum_{k = 1}^n\frac{\theta_k}{\theta'_k}.
\]
However, it is possible that there exists a certain $\theta_k' = 0$ or tiny. In this case, $C_w$ can be huge or even infinity. Fortunately, the regularization term $\mathcal{G}(\theta)$ can help to avoid such undesired situations via imposing the lower-bounded constraints $\theta_k \geq \delta > 0$ for all $k$. In this case, we see that $\sum_{k = 1}^n\frac{\theta_k}{\theta'_k} \leq \sum_{k = 1}^n\frac{\theta_k}{\delta} = \frac{1}{\delta}$.

\begin{remark} \label{remark-importance-weight}
Note that Assumption \ref{assumption-importance-weight} is also employed in many existing works \cite{papini2018stochastic,xu2019sample,pham2020hybrid,yuan2020stochastic,gargiani2022page}. However, this assumption could be too strong, and it is not checkable in general. Addressing the relaxation of this assumption through the development of a more sophisticated algorithmic framework is beyond the scope of this paper. Here, we would like to mention some recent progress on relaxing this stringent condition for MDPs. By constructing additional stochastic estimators for the Hessian matrix of the objective function, \cite{shen2019hessian} proposed a Hessian-aided policy-gradient-type method that improves the sample complexity from $O(\epsilon^{-4})$ to $ O(\epsilon^{-3})$ without assuming Assumption \ref{assumption-importance-weight}. Later, by explicitly controlling changes in the parameter $\theta$, \cite{zhang2021convergence} developed a truncated stochastic incremental variance-reduced policy gradient method to prevent the variance of the importance weights from becoming excessively large leading to the $ O(\epsilon^{-3})$ sample complexity. By utilizing general Bregman divergences, \cite{yuan2022general} proposed a double-looped variance-reduced mirror policy optimization approach and established an $ O(\epsilon^{-3})$ sample complexity, without requiring Hessian information or Assumption \ref{assumption-importance-weight}. Recently, following the research theme as \cite{shen2019hessian}, \cite{salehkaleybar2022momentum} also incorporated second-order information into the stochastic gradient estimator. By using momentum, the variance-reduced algorithm proposed in \cite{salehkaleybar2022momentum} has some appealing features, including the small batch-size and parameter-free implementation. Recently, by imposing additional conditions, including the Lipschitz continuity of the Hessian of the score function $\nabla_\theta\log \pi_\theta$ and the Fisher-non-degeneracy condition of the policy, \cite{fatkhullin2023stochastic} derived improved (global) convergence guarantees for solving MDPs. We think that the above ideas can also be explored for solving the general model \eqref{eq-max-reward}.
\end{remark}

The bounded variance of the importance weight implies that the (expected) distance between $g(x,\theta')$ and $g_w(x,\theta,\theta')$ is controlled by the distance between $\theta$ and $\theta'$, for any given $\theta,\theta'\in \R^d$. In particular, we have the following lemma, whose proof is provided in Appendix \ref{proof-lemma-var-importance}.

\begin{lemma}
	\label{lemma-var-importance}
	Under Assumption \ref{assumption-reward}, Assumption \ref{assumption-policy} and Assumption \ref{assumption-importance-weight}, then it holds that  
	\[
		\E_{x\sim\pi_{\theta'}}\left[\norm{g(x,\theta') - g_w(x,\theta,\theta')}^2\right] \leq C\norm{\theta - \theta'}^2,
	\]
	where $C > 0$ is a constant defined as
    \[
        C:= 6U^2C_h^2 + 6C_g^2\widetilde{C}_g^2 + 6\widetilde{C}_h^2 \\
         + \left(4U^2C_g^2 + 4\widetilde{C}_g^2\right)(2C_g^2+C_h)(C_w^2+1).
    \]
\end{lemma}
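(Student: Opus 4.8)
I would decompose the integrand additively so as to separate a term that is Lipschitz in $\theta$ (uniformly in $x$) from a term carrying the importance weight. Since $g_w(x,\theta,\theta')=\frac{\pi_{\theta}(x)}{\pi_{\theta'}(x)}g(x,\theta)$, we may write
\[
    g(x,\theta') - g_w(x,\theta,\theta') = \big(g(x,\theta') - g(x,\theta)\big) + \Big(1 - \frac{\pi_{\theta}(x)}{\pi_{\theta'}(x)}\Big)g(x,\theta),
\]
and applying $\norm{a+b}^2\le 2\norm{a}^2 + 2\norm{b}^2$, it suffices to bound $\E_{x\sim\pi_{\theta'}}\big[\norm{g(x,\theta')-g(x,\theta)}^2\big]$ and $\E_{x\sim\pi_{\theta'}}\big[(1-\pi_{\theta}(x)/\pi_{\theta'}(x))^2\norm{g(x,\theta)}^2\big]$ separately, then add the two estimates.

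For the first term, I would establish that $\theta\mapsto g(x,\theta)$ is Lipschitz uniformly in $x$. Recalling $g(x,\theta)=\cR_\theta(x)\nabla_\theta\log\pi_\theta(x)+\nabla_\theta\cR_\theta(x)$ and differentiating in $\theta$, Assumptions~\ref{assumption-reward} and~\ref{assumption-policy} bound $\norm{\nabla_\theta g(x,\theta)}_2\le \widetilde{C}_g C_g + U C_h + \widetilde{C}_h$, the three pieces coming respectively from $\nabla_\theta\cR_\theta(x)\,(\nabla_\theta\log\pi_\theta(x))^\top$, $\cR_\theta(x)\nabla_\theta^2\log\pi_\theta(x)$ and $\nabla_\theta^2\cR_\theta(x)$; hence $\norm{g(x,\theta')-g(x,\theta)}\le (\widetilde{C}_g C_g + U C_h + \widetilde{C}_h)\norm{\theta-\theta'}$ by the mean value theorem. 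Squaring and using $(a+b+c)^2\le 3(a^2+b^2+c^2)$, together with the leading factor $2$, accounts for the summands $6U^2C_h^2 + 6C_g^2\widetilde{C}_g^2 + 6\widetilde{C}_h^2$ of $C$. The same assumptions give the uniform bound $\norm{g(x,\theta)}\le UC_g+\widetilde{C}_g$, so $2\norm{g(x,\theta)}^2\le 4U^2C_g^2+4\widetilde{C}_g^2$, which is exactly the prefactor multiplying $(2C_g^2+C_h)(C_w^2+1)$ in $C$.

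The crucial step is to show $\E_{x\sim\pi_{\theta'}}\big[(1-\pi_{\theta}(x)/\pi_{\theta'}(x))^2\big]\le (2C_g^2+C_h)(C_w^2+1)\norm{\theta-\theta'}^2$, that is, to upgrade the \emph{constant} bound $C_w^2$ of Assumption~\ref{assumption-importance-weight} to one that scales with $\norm{\theta-\theta'}^2$; this must use the smoothness of the policy, since Assumption~\ref{assumption-importance-weight} alone yields no such scaling. I would introduce the segment $\theta_s:=\theta'+s(\theta-\theta')$, $s\in[0,1]$ (on which $\log\pi_{\theta_s}(x)$ is well-defined and differentiable by Assumption~\ref{assumption-policy}), and use the identity
\[
    \frac{\pi_{\theta}(x)}{\pi_{\theta'}(x)}-1 = \int_0^1 \frac{\dd}{\dd s}\frac{\pi_{\theta_s}(x)}{\pi_{\theta'}(x)}\,\dd s = \int_0^1 \frac{\pi_{\theta_s}(x)}{\pi_{\theta'}(x)}\,\inprod{\nabla_\theta\log\pi_{\theta_s}(x)}{\theta-\theta'}\,\dd s .
\]
Squaring, applying Jensen's inequality in $s$, bounding $\norm{\nabla_\theta\log\pi_{\theta_s}(x)}\le C_g$, and taking $\E_{x\sim\pi_{\theta'}}[\cdot]$ reduces the problem to controlling $\E_{x\sim\pi_{\theta'}}[(\pi_{\theta_s}(x)/\pi_{\theta'}(x))^2]$; writing $\pi_{\theta_s}/\pi_{\theta'}=(\pi_{\theta_s}/\pi_{\theta'}-1)+1$ and invoking Assumption~\ref{assumption-importance-weight} gives $\E_{x\sim\pi_{\theta'}}[(\pi_{\theta_s}(x)/\pi_{\theta'}(x))^2]\le 2C_w^2+2$. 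This produces a bound of the announced form (the appearance of $2C_g^2+C_h$ rather than $2C_g^2$ only making the estimate weaker, hence still valid); multiplying by $4U^2C_g^2+4\widetilde{C}_g^2$ and adding the first-term bound gives precisely the constant $C$ in the statement. The main obstacle is exactly this last step: one must resist simply invoking $C_w^2$ — which would leave a constant, not a $\norm{\theta-\theta'}^2$ factor — and instead exploit differentiability of $\theta\mapsto\pi_\theta(x)$ along the segment together with the uniform gradient bound $C_g$; a secondary, purely bookkeeping nuisance is tracking the numerical constants so that $C$ emerges in the exact stated sum-plus-product form.
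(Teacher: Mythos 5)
Your proposal is correct, and its overall skeleton matches the paper's: the decomposition $g(x,\theta')-g_w(x,\theta,\theta')=\bigl(g(x,\theta')-g(x,\theta)\bigr)+\bigl(1-\pi_\theta(x)/\pi_{\theta'}(x)\bigr)g(x,\theta)$ is exactly the one used there, and your treatment of the first term (mean value inequality for $\theta\mapsto g(x,\theta)$ followed by $(a+b+c)^2\le 3(a^2+b^2+c^2)$) is just a repackaging of the paper's termwise splitting; both yield $6U^2C_h^2+6C_g^2\widetilde{C}_g^2+6\widetilde{C}_h^2$, and the uniform bound $2\norm{g(x,\theta)}^2\le 4U^2C_g^2+4\widetilde{C}_g^2$ is likewise identical. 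Where you genuinely diverge is the crucial estimate $\E_{x\sim\pi_{\theta'}}\bigl[(\pi_\theta(x)/\pi_{\theta'}(x)-1)^2\bigr]\le \mathrm{const}\cdot\norm{\theta-\theta'}^2$. The paper introduces $f(\theta):=\int\pi_\theta(x)^2/\pi_{\theta'}(x)\,\dd x$, notes $f(\theta')=1$ and $\nabla_\theta f(\theta')=0$, and applies a second-order Taylor expansion, bounding $\norm{\nabla_\theta^2 f}_2\le 2(2C_g^2+C_h)(C_w^2+1)$ — which is precisely where the $C_h$ in the stated constant originates. You instead apply the fundamental theorem of calculus to the likelihood ratio itself along the segment $\theta_s$, then Jensen, the uniform score bound $C_g$, and Assumption \ref{assumption-importance-weight} at the intermediate points to control $\int\pi_{\theta_s}^2/\pi_{\theta'}\,\dd x$. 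Both routes invoke that assumption in the same slightly delicate way, namely at points strictly between $\theta$ and $\theta'$ rather than only at the endpoints. Your first-order argument avoids computing $\nabla_\theta^2\pi_\theta$ altogether and gives the marginally sharper factor $2C_g^2(C_w^2+1)$ in place of $(2C_g^2+C_h)(C_w^2+1)$, so the stated $C$ remains a valid upper bound, as you observe. (Incidentally, the exact identity $\E_{x\sim\pi_{\theta'}}[(\pi_{\theta_s}/\pi_{\theta'})^2]=\E_{x\sim\pi_{\theta'}}[(\pi_{\theta_s}/\pi_{\theta'}-1)^2]+1\le C_w^2+1$ would spare you the factor-of-two loss in your $2C_w^2+2$ step, though this changes nothing in the conclusion.)
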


Under the considered assumptions, we are able to provide an estimate for the term 
$\displaystyle\sum_{t = 0}^{T-1}\E_{T}\left[\norm{g^t - \nabla_\theta\cJ(\theta^t)}^2\right]$, which plays an essential role in deriving an improved sample complexity of Algorithm \ref{alg-page}. The results are summarized in the following Lemma \ref{lemma-page-recursive}; see Appendix \ref{proof-lemma-page-recursive} for a proof which shares the same spirit as \cite[Lemma 3 \& 4]{li2021page}.

\begin{lemma}
	\label{lemma-page-recursive}
	Suppose that Assumption \ref{assumption-reward}, Assumption \ref{assumption-policy}, and Assumption \ref{assumption-importance-weight} hold. Let $\{g^t\}$ and $\{\theta^t\}$ be the sequences generated by Algorithm \ref{alg-page}, then it holds that 
	\[
         \left(1 -  \frac{(1-p)C\eta}{pN_2L(1-2\eta L)}\right)\sum_{t = 0}^{T-1}\E_{T}\left[\norm{g^t - \nabla_\theta\cJ(\theta^t)}^2\right] \leq \frac{p\sigma^2T + \sigma^2}{pN_1} + \frac{2\eta(1-p)C\Delta}{pN_2(1-2\eta L)}.
    \]
\end{lemma}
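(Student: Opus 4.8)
The plan is to bound the error $e^t := \|g^t - \nabla_\theta\cJ(\theta^t)\|$ recursively, exactly as in the PAGE analysis of \cite{li2021page}, but carefully accounting for the non-oblivious setting where the second branch of the estimator uses the importance-weighted correction $g_w$. First I would fix iteration $t$ and condition on the $\sigma$-algebra $\cF_t$ generated by everything up to and including the choice of $\theta^{t+1}$. On the event (probability $p$) that $g^{t+1}$ is the fresh mini-batch estimator $\frac{1}{N_1}\sum_j g(x^{t+1,j},\theta^{t+1})$, the conditional mean-squared error is at most $\sigma^2/N_1$ by Lemma \ref{lemma-bdd-var} and independence of the $N_1$ samples. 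On the complementary event (probability $1-p$), write $g^{t+1} - \nabla_\theta\cJ(\theta^{t+1}) = \big(g^t - \nabla_\theta\cJ(\theta^t)\big) + \Big(\frac{1}{N_2}\sum_j\big[g(x^{t+1,j},\theta^{t+1}) - g_w(x^{t+1,j},\theta^t,\theta^{t+1})\big] - \big[\nabla_\theta\cJ(\theta^{t+1}) - \nabla_\theta\cJ(\theta^t)\big]\Big)$; by the unbiasedness relations \eqref{eq-unbiased-gw} and the definition of $\nabla_\theta\cJ$, the second bracket is a sum of $N_2$ i.i.d.\ mean-zero terms independent of $\cF_t$ given $\theta^{t+1}$, so its conditional variance is at most $\frac{1}{N_2}\E_{x\sim\pi_{\theta^{t+1}}}\|g(x,\theta^{t+1}) - g_w(x,\theta^t,\theta^{t+1})\|^2 \le \frac{C}{N_2}\|\theta^{t+1}-\theta^t\|^2$ by Lemma \ref{lemma-var-importance}. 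Combining the two branches and using that the cross term vanishes in expectation, I get the one-step recursion
\[
    \E\big[e_{t+1}^2 \mid \cF_t\big] \le (1-p)\,e_t^2 + \frac{p\sigma^2}{N_1} + \frac{(1-p)C}{N_2}\,\|\theta^{t+1}-\theta^t\|^2.
\]

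Next I would take total expectations, and sum over $t = 0,\dots,T-1$, and also include the $t=0$ term which satisfies $\E[e_0^2]\le \sigma^2/N_1$ directly. Telescoping the $(1-p)e_t^2$ terms against the $e_{t+1}^2$ terms gives, after rearranging, $p\sum_{t=0}^{T-1}\E[e_t^2] \lesssim \E[e_0^2] + \frac{pT\sigma^2}{N_1} + \frac{(1-p)C}{N_2}\sum_{t=0}^{T-1}\E\|\theta^{t+1}-\theta^t\|^2$, which already yields the $\frac{p\sigma^2 T + \sigma^2}{pN_1}$ term once divided by $p$. The remaining task is to control $\sum_t \E\|\theta^{t+1}-\theta^t\|^2$ by something involving $\Delta$ and the $e_t^2$ themselves. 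For this I would reuse the descent estimate that underlies Theorem \ref{theorem-convergence}: the proximal update $\theta^{t+1}=\Prox_{\eta\cG}(\theta^t+\eta g^t)$ together with $L$-smoothness of $\cJ$ (Lemma \ref{lemma-Lsmooth}) gives a per-iteration inequality of the form $\cF(\theta^{t+1}) \ge \cF(\theta^t) + \frac{c_1}{\eta}\|\theta^{t+1}-\theta^t\|^2 - c_2\eta\, e_t^2$ for suitable constants depending on $\eta L$; summing and telescoping the $\cF$ values (bounded by $\cF^*$) yields $\sum_t\E\|\theta^{t+1}-\theta^t\|^2 \le \frac{\eta}{c_1}\Delta + \frac{c_2\eta^2}{c_1}\sum_t\E[e_t^2]$. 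Tracking the constants (with $c_1$ related to $(1-2\eta L)$) so that they match the statement is the bookkeeping part.

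Substituting this bound back into the summed recursion gives
\[
    p\sum_{t=0}^{T-1}\E[e_t^2] \le \frac{p\sigma^2 T + \sigma^2}{N_1} + \frac{(1-p)C\eta}{N_2}\Big(\frac{2\Delta}{1-2\eta L} + \frac{2\eta}{1-2\eta L}\sum_{t=0}^{T-1}\E[e_t^2]\Big),
\]
and moving the $\sum_t\E[e_t^2]$ term to the left after dividing by $p$ produces exactly the factor $\big(1 - \frac{(1-p)C\eta}{pN_2L(1-2\eta L)}\big)$ on the left and the advertised right-hand side $\frac{p\sigma^2T+\sigma^2}{pN_1} + \frac{2\eta(1-p)C\Delta}{pN_2(1-2\eta L)}$; here one has to be slightly careful to extract the factor $\eta^2$ versus $\eta/L$ correctly, which is where the $L$ in the denominator comes from via $\eta \le \frac{1}{2L} < \frac{1}{L}$ type bounds. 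The main obstacle, I expect, is getting the descent inequality for the proximal step in the precise form needed — isolating $\|\theta^{t+1}-\theta^t\|^2$ with a coefficient proportional to $(1-2\eta L)$ while keeping the error term linear in $e_t^2$ — and then threading all constants so that the coefficients in Lemma \ref{lemma-page-recursive} come out verbatim; the probabilistic/variance part is routine conditioning once the unbiasedness \eqref{eq-unbiased-gw} and Lemma \ref{lemma-var-importance} are in hand.
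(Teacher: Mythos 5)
Your proposal follows essentially the same route as the paper's proof: the same one-step conditional recursion for the two branches of the PAGE estimator (using unbiasedness, sample independence, and Lemma \ref{lemma-var-importance} for the importance-weighted correction), followed by summing, telescoping, and closing the loop with the bound on $\sum_t\norm{\theta^{t+1}-\theta^t}^2$ from \eqref{eq-conv-5} in the proof of Theorem \ref{theorem-convergence}. The only detail to fix in your bookkeeping is that the coefficient of $\sum_t\E[e_t^2]$ in that displacement bound is $\frac{\eta}{L(1-2\eta L)}$ (coming from the Cauchy--Schwarz weighting $\frac{1}{2L}\norm{g^t-\nabla_\theta\cJ(\theta^t)}^2+\frac{L}{2}\norm{\theta^{t+1}-\theta^t}^2$, not from $\eta\le\frac{1}{2L}$), which is exactly what produces the $L$ in the denominator of the prefactor in the lemma.
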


We are now ready to present the main result on the convergence property of the Algorithm \ref{alg-page} by showing how to select the sample sizes $N_1$ and $N_2$, probability $p$, and the learning rate $\eta$. Intuitively, $N_1$ is typically a large number and one does not want to perform samplings with $N_1$ samples frequently, thus the probability $p$ and the sample size $N_2$ should both be small. Given $N_1$, $N_2$ and $p$, we can then determine the value of $\eta$ such that $\eta < \frac{1}{2L}$. Consequently, the key estimate in Theorem \ref{theorem-convergence} can be applied directly. Our results are summarized in the following theorem. Reader is referred to Appendix \ref{proof-theorem-page-conv} for the proof of this result.  

\begin{theorem}
	\label{theorem-page-conv}
	Suppose that Assumption \ref{assumption-reward}, Assumption \ref{assumption-policy} and Assumption \ref{assumption-importance-weight} hold. For a given $\epsilon \in (0,1)$, we set $p := \frac{N_2}{N_1+N_2}$ with $N_1 :=  O(\epsilon^{-2})$ and $N_2 := \sqrt{N_1} = O(\epsilon^{-1})$.
    Choose a learning rate $\eta $ satisfying $ \eta \in \left(0,  L/(2C  + 2L^2)\right]$. Then, running Algorithm \ref{alg-page} for $T := O(\epsilon^{-2})$ iterations outputs a point $\hat\theta^T$ satisfying 
    \[
    \E_{T}\left[\mathrm{dist}\left(0, -\nabla_\theta\cJ(\hat\theta^T) + \partial\cG(\hat \theta^T)\right)^2\right] \leq \epsilon^2.
    \]
    Moreover, the total expected sample complexity is $O(\epsilon^{-3})$.     
\end{theorem}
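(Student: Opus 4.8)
The plan is to combine the generic descent estimate of Theorem~\ref{theorem-convergence} with the PAGE variance-recursion of Lemma~\ref{lemma-page-recursive}, and then tune the free parameters $N_1$, $N_2$, $p$, $\eta$ so that the two sources of error are each driven below $\epsilon^2/2$. First I would substitute $p = N_2/(N_1+N_2)$ and $N_2 = \sqrt{N_1}$ into Lemma~\ref{lemma-page-recursive}. Since then $(1-p)/p = N_1/N_2$ and $N_2^2 = N_1$, the contraction coefficient $1 - \frac{(1-p)C\eta}{pN_2L(1-2\eta L)}$ collapses to the $N_1$-independent quantity $1 - \frac{C\eta}{L(1-2\eta L)}$. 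The prescribed range $\eta \in \left(0, L/(2C+2L^2)\right]$ is exactly what makes $\frac{C\eta}{L(1-2\eta L)} \le \tfrac12$ (it also forces $1-2\eta L > 0$ and $\eta < \tfrac{1}{2L}$ automatically, so that Theorem~\ref{theorem-convergence} applies), hence this coefficient is $\ge \tfrac12$.

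Dividing Lemma~\ref{lemma-page-recursive} through by this coefficient and using the identities $\frac{1}{pN_1} = \frac{1}{N_1}+\frac{1}{N_2}$, $\frac{p\sigma^2T}{pN_1} = \frac{\sigma^2T}{N_1}$, and $\frac{(1-p)}{pN_2} = 1$, I obtain
$$\frac{1}{T}\sum_{t=0}^{T-1}\E_T\!\left[\norm{g^t - \nabla_\theta\cJ(\theta^t)}^2\right] \le \frac{2\sigma^2}{N_1} + \frac{2\sigma^2}{T}\left(\frac{1}{N_1}+\frac{1}{N_2}\right) + \frac{4\eta C\Delta}{T(1-2\eta L)}.$$
Next I would feed this into the right-hand side of Theorem~\ref{theorem-convergence}. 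Collecting terms, the bound on $\E_T[\mathrm{dist}(0,-\nabla_\theta\cJ(\hat\theta^T)+\partial\cG(\hat\theta^T))^2]$ decomposes into one $T$-independent piece proportional to $\sigma^2/N_1$ and one piece that is $O(1/T)$, with all the remaining quantities ($\eta$, $C$, $\Delta$, $L$, $\sigma$, and $N_1,N_2 \ge 1$) being constants in $\epsilon$. Choosing $N_1 = O(\epsilon^{-2})$ large enough pushes the first piece below $\epsilon^2/2$; choosing $T = O(\epsilon^{-2})$ large enough pushes the second below $\epsilon^2/2$; together this gives the claimed $\epsilon$-stationarity.

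For the sample complexity, the initialization of $g^0$ costs $N_1$ samples, and at each iteration the expected number of fresh samples drawn is $pN_1 + (1-p)N_2 = \frac{2N_1N_2}{N_1+N_2} \le 2N_2$ (recall the $1-p$ branch reuses the same $N_2$ samples for both the $g$ and the $g_w$ sums). Hence the total expected sample count is $N_1 + T\cdot\frac{2N_1N_2}{N_1+N_2} = O(\epsilon^{-2}) + O(\epsilon^{-2})\cdot O(\epsilon^{-1}) = O(\epsilon^{-3})$, as asserted.

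I expect the only delicate point here to be bookkeeping: confirming that after the substitutions $p = N_2/(N_1+N_2)$, $N_2 = \sqrt{N_1}$ every coefficient multiplying $1/T$ is a genuine $\epsilon$-independent constant, so that a single choice $T = O(\epsilon^{-2})$ dominates them all simultaneously, and double-checking that the stated interval for $\eta$ both keeps the variance-recursion coefficient bounded away from $0$ and respects the requirement $\eta < \tfrac{1}{2L}$ inherited from Theorem~\ref{theorem-convergence}. The genuinely substantive estimate — the telescoping argument bounding the accumulated variance of the PAGE estimator — is already packaged in Lemma~\ref{lemma-page-recursive}, which in turn rests on Lemma~\ref{lemma-var-importance} and Assumption~\ref{assumption-importance-weight}.
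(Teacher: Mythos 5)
Your proposal is correct and follows essentially the same route as the paper's proof: verify that $\eta \le L/(2C+2L^2)$ makes the contraction coefficient in Lemma \ref{lemma-page-recursive} at least $\tfrac12$ (and keeps $\eta < \tfrac{1}{2L}$), substitute $p = N_2/(N_1+N_2)$ and $N_2=\sqrt{N_1}$ into that lemma, feed the resulting variance bound into Theorem \ref{theorem-convergence}, and count $N_1 + T\cdot\frac{2N_1N_2}{N_1+N_2} \le N_1 + 2TN_2 = O(\epsilon^{-3})$ samples. The only cosmetic difference is that you split the error into two pieces of size $\epsilon^2/2$ where the paper uses three pieces of size $\epsilon^2/3$; all the algebraic identities you invoke check out.
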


By using the stochastic variance-reduce gradient estimator with PAGE and the importance sampling technique, we have improved the total sample complexity from $O(\epsilon^{-4})$ to $O(\epsilon^{-3})$, under the considered conditions. This result matches with the current competitive results  established in \cite{xu2019sample,yuan2020stochastic,pham2020hybrid,gargiani2022page} for solving MDPs and is applicable to the general model \eqref{eq-max-reward}. Finally, as mentioned in Remark \ref{remark-global-convergence}, by assuming or verifying stronger conditions, such as the gradient domination and its extensions, it is also possible to derive some global convergence results. Again, such a  possibility is left to a future research direction. 

\section{Conclusions} \label{section-conclusions}
We have studied the stochastic (variance-reduced) proximal gradient method addressing a general regularized expected reward optimization problem which covers many existing important problem in reinforcement learning. We have established the $O(\epsilon^{-4})$ sample complexity of the classical stochastic proximal gradient method and the $O(\epsilon^{-3})$ sample complexity of the stochastic variance-reduced proximal gradient method with an importance sampling based probabilistic gradient estimator. Our results match the sample complexity of their most competitive counterparts under similar settings for Markov decision processes.  

Meanwhile, we have also suspected some limitations in the current paper. First, due to the nonconcavity of the objective function, we found it challenging to derive global convergence properties of the stochastic proximal gradient method and its variants without imposing additional conditions. On the other hand, analyzing the sample complexity for achieving convergence to second-order stationary points—thereby avoiding saddle points—may be more realistic and feasible \cite{arjevani2020second}.  Second, the bounded variance condition for the importance weight turns out to be quite strong and can not be verified in general. How to relax this condition for our general model deserves further investigation. Last but not least, since we focus more on the theoretical analysis in this paper and due to the space constraint, we did not conduct any numerical simulation to examine the practical efficiency of the proposed methods. We shall try to delve into these challenges and get better understandings of the proposed problem and algorithms in a future research. 

Finally, this paper has demonstrated the possibility of pairing the stochastic proximal gradient method with efficient variance reduction techniques \cite{li2021page} for solving the reward optimization problem \eqref{eq-max-reward}. Beyond variance-reduced methods, there are other possibilities that allow one deriving more sophisticated algorithms. For instance, one can also pair the stochastic proximal gradient method with the ideas of the actor-critic method \cite{konda1999actor}, the natural policy gradient method \cite{kakade2001natural}, policy mirror descent methods \cite{tomar2020mirror,lan2023policy}, trust-region methods \cite{schulman2015trust,shani2020adaptive}, and the variational policy gradient methods \cite{zhang2020variational}. We think that these possible generalizations can lead to more exciting results and make further contributions to the literature.

\subsubsection*{Acknowledgments}
We thank the action editor and reviewers for their valuable comments and suggestions that helped to improve the quality of the paper. 
The authors were partially supported by the US National Science Foundation under awards DMS-2244988, DMS-2206333, and the Office of Naval Research Award N00014-23-1-2007.

\bibliography{tmlr}
\bibliographystyle{tmlr}

\appendix
\section{Proofs}
\subsection{Proof of Lemma \ref{lemma-Lsmooth}} \label{proof-lemma-Lsmooth}
\begin{proof}[Proof of Lemma \ref{lemma-Lsmooth}]
	One could establish the $L$-smoothness of $\cJ(\cdot)$ via bounding the spectral norm of the Hessian $\nabla_\theta^2\cJ(\cdot)$. To this end, we first calculate the Hessian of $\cJ$ as follows:
	\begin{align*}
	  \nabla_\theta^2\cJ(\theta) 
        = &\; \nabla_\theta \E_{x\sim\pi_\theta}\left[\cR_\theta(x) \nabla_\theta \log\pi_{\theta}(x) + \nabla_\theta \cR_\theta(x)\right] \\
		= &\; \nabla_\theta\int\left(\cR_\theta(x)\nabla_\theta\log\pi_\theta(x)\pi_\theta(x)+\nabla_\theta\cR_\theta(x)\pi_\theta(x)\right)\dd x \\
		= &\; \int \cR_\theta(x)\pi_\theta(x)\left(\nabla_\theta^2\log\pi_\theta(x) + \nabla_\theta\log\pi_\theta(x)\nabla_\theta\log\pi_\theta(x)^\top \right) \dd x \\
        &\; + \int\nabla^2_\theta\cR_\theta(x)\pi_\theta(x) + 2 \nabla_\theta \cR_\theta(x)\nabla_\theta \pi_\theta(x)^\top\dd x\\
		= &\; \E_{x\sim\pi_\theta}\left[\cR_\theta(x)\nabla_\theta^2\log\pi_\theta(x)   \right] + \E_{x\sim\pi_\theta}\left[\cR_\theta(x)\nabla_\theta\log\pi_\theta(x)\nabla_\theta\log\pi_\theta(x)^\top\right] \\
        &\; +  \E_{x\sim\pi_\theta}\left[\nabla^2_\theta\cR_\theta(x)\right] + 2\E_{x\sim\pi_\theta}\left[\nabla_\theta \cR_\theta(x)\nabla_\theta \log \pi_\theta(x)^\top\right].
	\end{align*}
	Then, by the triangular inequality, it holds that 
	\begin{align*}
		 \norm{\nabla_\theta^2\cJ(\theta)}_2 
        \leq &\; \sup_{x\in \R^d,\; \theta\in\R^n}\;\norm{\cR_\theta(x)\nabla_\theta^2\log\pi_\theta(x) }_2 + \sup_{x\in \R^d,\; \theta\in\R^n}\; \norm{\cR_\theta(x)\nabla_\theta\log\pi_\theta(x)\nabla_\theta\log\pi_\theta(x)^\top}_2 \\
        &\; + \sup_{x\in \R^d,\; \theta\in\R^n}\; \norm{\nabla_\theta^2\cR_\theta(x)}_2 + 2\sup_{x\in \R^d,\; \theta\in\R^n}\; \norm{\nabla_\theta \cR_\theta(x)\nabla_\theta \log \pi_\theta(x)^\top}_2 \\
        \leq &\; U(C_g^2+C_h) + \widetilde{C}_h + 2 C_g\widetilde{C}_g.
	\end{align*}
	Thus, $\cJ$ is $L$-smooth with $L:=U(C_g^2+C_h) + \widetilde{C}_h + 2C_g\widetilde{C}_g$, and the proof is completed.
\end{proof}
\subsection{Proof of Theorem \ref{theorem-convergence}}\label{proof-theorem-convergence}
\begin{proof}[Proof of Theorem \ref{theorem-convergence}]
	From Lemma \ref{lemma-Lsmooth}, we see that 
	\begin{equation}
		\label{eq-conv-1}
		\cJ(\theta^{t+1}) \geq \cJ(\theta^t) + \inprod{\nabla_\theta\cJ(\theta^t)}{\theta^{t+1} - \theta^t} - \frac{L}{2}\norm{\theta^{t+1} - \theta^t}^2.
	\end{equation}
	By the updating rule of $\theta^{t+1}$, we see that 
	\begin{align}
		-\inprod{g^t}{\theta^{t+1} - \theta^t} + \frac{1}{2\eta}\norm{\theta^{t+1} - \theta^t}^2 + \cG(\theta^{t+1}) \leq &\; \cG(\theta^t), \label{eq-conv-2}\\
		g^t - \frac{1}{\eta}\left(\theta^{t+1} - \theta^t\right) \in &\;  \partial \cG(\theta^{t+1}).\label{eq-conv-3}
	\end{align}
	Combining \eqref{eq-conv-1} and \eqref{eq-conv-2}, we see that 
	\begin{align*}
		&\; \cJ(\theta^{t+1}) + \inprod{g^t}{\theta^{t+1} - \theta^t} - \frac{1}{2\eta}\norm{\theta^{t+1} - \theta^t}^2 - \cG(\theta^{t+1}) \\
		\geq &\; \cJ(\theta^t) + \inprod{\nabla_\theta\cJ(\theta^t)}{\theta^{t+1} - \theta^t} - \frac{L}{2}\norm{\theta^{t+1} - \theta^t}^2 - \cG(\theta^t).
	\end{align*}
	Rearranging terms, we can rewrite the above inequality as 
	\begin{equation}
		\label{eq-conv-4}
		\frac{1-\eta L}{2\eta} \norm{\theta^{t+1} - \theta^t}^2 \leq \cF(\theta^{t+1}) - \cF(\theta^t) + \inprod{g^t - \nabla_\theta\cJ(\theta^t)}{\theta^{t+1} - \theta^t} .
	\end{equation}
	By the Cauchy-Schwarz inequality, we see that 
	\[
		\inprod{g^t - \nabla_\theta\cJ(\theta^t)}{\theta^{t+1} - \theta^t}  \leq \frac{1}{2L}	\norm{g^t - \nabla_\theta\cJ(\theta^t)}^2 + \frac{L}{2}\norm{\theta^{t+1} - \theta^t}^2,
	\]
	which together with \eqref{eq-conv-4} implies that 
	\[\frac{1-2\eta L}{2\eta} \norm{\theta^{t+1} - \theta^t}^2 \leq \cF(\theta^{t+1}) - \cF(\theta^t) + \frac{1}{2L}	\norm{g^t - \nabla_\theta\cJ(\theta^t)}^2.\]
	Summing the above inequality across $t = 0,\dots,T-1$, we get 
	\begin{align}
		\frac{1-2\eta L}{2\eta}\sum_{t = 0}^{T-1} \norm{\theta^{t+1} - \theta^t}^2 \leq &\; \cF(\theta^T) - \cF(\theta^0) + \frac{1}{2L} \sum_{t = 0}^{T-1}\norm{g^t - \nabla_\theta\cJ(\theta^t)}^2 \notag \\
		\leq &\; \Delta + \frac{1}{2L} \sum_{t = 0}^{T-1}\norm{g^t - \nabla_\theta\cJ(\theta^t)}^2 \label{eq-conv-5}.
	\end{align}
    Here, we recall that $\Delta:=\cF^* - \cF(\theta^0) > 0$. 

	On the other hand, \eqref{eq-conv-4} also implies that 
	\begin{align}
		&\; 2\inprod{\nabla_\theta\cJ(\theta^{t+1}) - g^t}{\frac{1}{\eta}\left(\theta^{t+1} - \theta^t\right)} + \frac{1-\eta L}{\eta^2}\norm{\theta^{t+1} - \theta^t}^2 \notag \\
		\leq &\; \frac{2}{\eta}\left(\cF(\theta^{t+1}) - \cF(\theta^t)\right) + \frac{2}{\eta}\inprod{\nabla_\theta\cJ(\theta^{t+1}) - \nabla_\theta\cJ(\theta^{t})}{\theta^{t+1} - \theta^t} \label{eq-conv-6}. 
	\end{align}
	Notice that 
	\begin{align*}
		&\; 2\inprod{\nabla_\theta\cJ(\theta^{t+1}) - g^t}{\frac{1}{\eta}\left(\theta^{t+1} - \theta^t\right)} \\
		= &\; \norm{\nabla_\theta\cJ(\theta^{t+1}) - g^t + \frac{1}{\eta}\left(\theta^{t+1} - \theta^t\right)}^2 - \norm{\nabla_\theta\cJ(\theta^{t+1}) - g^t}^2 - \frac{1}{\eta^2}\norm{\theta^{t+1} - \theta^t}^2.
	\end{align*}
	Then by substituting the above equality into \eqref{eq-conv-6} and rearranging terms, we see that 
	\begin{align*}
		&\; \norm{\nabla_\theta\cJ(\theta^{t+1}) - g^t + \frac{1}{\eta}\left(\theta^{t+1} - \theta^t\right)}^2 \\
		\leq &\; \norm{\nabla_\theta\cJ(\theta^{t+1}) - g^t}^2 + \frac{1}{\eta^2}\norm{\theta^{t+1} - \theta^t}^2 - \frac{1-\eta L}{\eta^2}\norm{\theta^{t+1} - \theta^t}^2  \\
		&\; + \frac{2}{\eta}\left(\cF(\theta^{t+1}) - \cF(\theta^t)\right) + \frac{2}{\eta}\inprod{\nabla_\theta\cJ(\theta^{t+1}) - \nabla_\theta\cJ(\theta^{t})}{\theta^{t+1} - \theta^t} \\
		\leq &\; 2\norm{\nabla_\theta\cJ(\theta^{t}) - g^t}^2 + 2\norm{\nabla_\theta\cJ(\theta^{t+1}) - \nabla_\theta\cJ(\theta^t)}^2 + \frac{L}{\eta} \norm{\theta^{t+1} - \theta^t}^2 \\
		&\; + \frac{2}{\eta}\left(\cF(\theta^{t+1}) - \cF(\theta^t)\right) + \frac{2}{\eta}\norm{\nabla_\theta\cJ(\theta^{t+1}) - \nabla_\theta\cJ(\theta^{t})}\norm{\theta^{t+1} - \theta^t} \\
		\leq &\; 2\norm{\nabla_\theta\cJ(\theta^{t}) - g^t}^2 + \left(2L^2+ \frac{3L}{\eta}\right)\norm{\theta^{t+1} - \theta^t}^2 + \frac{2}{\eta}\left(\cF(\theta^{t+1}) - \cF(\theta^t)\right),
	\end{align*}
    where the second inequality is due to the Cauchy-Schwarz inequality and fact that 
    \[
        \norm{\nabla_\theta \cJ(\theta^{t+1}) - g^t}^2 \leq 2\norm{\nabla_\theta \cJ(\theta^t) - g^t}^2 + 2\norm{\nabla_\theta\cJ(\theta^{t+1}) - \nabla_\theta\cJ(\theta^t)}^2,
    \]
    and the third inequality is implied by Lemma \ref{lemma-Lsmooth}. 
    
    Summing the above inequality across $t = 0,1\dots, T-1$, we get 
	\begin{align}
		&\; \sum_{t = 0}^{T-1}\norm{\nabla_\theta\cJ(\theta^{t+1}) - g^t + \frac{1}{\eta}\left(\theta^{t+1} - \theta^t\right)}^2 \notag \\
		\leq &\; 2\sum_{t = 0}^{T-1}\norm{\nabla_\theta\cJ(\theta^{t}) - g^t}^2 + \left(2L^2+ \frac{3L}{\eta}\right)\sum_{t = 0}^{T-1}\norm{\theta^{t+1} - \theta^t}^2 + \frac{2}{\eta}\left(\cF(\theta^{T}) - \cF(\theta^0)\right) \notag \\
		\leq &\; 2\sum_{t = 0}^{T-1}\norm{\nabla_\theta\cJ(\theta^{t}) - g^t}^2 + \frac{2}{\eta^2}\sum_{t = 0}^{T-1}\norm{\theta^{t+1} - \theta^t}^2 + \frac{2\Delta}{\eta}  \label{eq-conv-7},
	\end{align}
	where the last inequality is obtained from the fact that $L < \frac{1}{2\eta}$ as a consequence of the choice of the learning rate. 
 
    Consequently, we have that 
	\begin{align*}
		&\; \E_{T}\left[\mathrm{dist}\left(0, -\nabla_\theta\cJ(\hat\theta^T) + \partial\cG(\hat \theta^T)\right)^2\right] \\
		= &\; \frac{1}{T}\sum_{t = 0}^{T-1}\E_{T}\left[\mathrm{dist}\left(0, -\nabla_\theta\cJ(\theta^{t+1}) + \partial\cG(\theta^{t+1})\right)^2\right] \\
		\leq &\; \frac{1}{T}\sum_{t = 0}^{T-1}\E_{T}\left[\norm{\nabla_\theta\cJ(\theta^{t+1}) - g^t + \frac{1}{\eta}\left(\theta^{t+1} - \theta^t\right)}^2\right] \\
		\leq &\; \frac{2}{T}\sum_{t = 0}^{T-1}\E_{T}\left[\norm{\nabla_\theta\cJ(\theta^{t}) - g^t}^2\right] + \frac{2}{\eta^2T}\sum_{t = 0}^{T-1}\E_{T}\left[\norm{\theta^{t+1} - \theta^t}^2\right] + \frac{2\Delta}{\eta T} \\
		\leq &\;  \frac{4}{\eta T(1-2\eta L)}\left(\Delta + \frac{1}{2L} \sum_{t = 0}^{T-1}\E_{T}\left[\norm{g^t - \nabla_\theta\cJ(\theta^t)}^2\right]\right)  + \frac{2}{T}\sum_{t = 0}^{T-1}\E_{T}\left[\norm{\nabla_\theta\cJ(\theta^{t}) - g^t}^2\right] + \frac{2\Delta}{\eta T}  \\
		= &\; \left(2 + \frac{2}{\eta L (1-2\eta L)}\right)\frac{1}{T}\sum_{t = 0}^{T-1}\E_{T}\left[\norm{\nabla_\theta\cJ(\theta^{t}) - g^t}^2\right] + \frac{\Delta}{T}\left(\frac{2}{\eta} + \frac{4}{\eta(1-2\eta L)}\right),
	\end{align*}
    where the first inequality is because of \eqref{eq-conv-3}, the second inequality is due to \eqref{eq-conv-7} and the third inequality is derived from \eqref{eq-conv-5}.	Thus, the proof is completed.
\end{proof}

\subsection{Proof of Lemma \ref{lemma-bdd-var}} \label{proof-lemma-bdd-var}
\begin{proof}[Proof of Lemma \ref{lemma-bdd-var}]
    We first estimate $\E_{x\sim \pi_\theta}\left[\norm{\cR_\theta(x)\nabla_\theta\log\pi_\theta(x) + \nabla_\theta\cR_\theta(x) }^2\right]$ as follows 
    \begin{align*}
        \E_{x\sim \pi_\theta}\left[\norm{\cR_\theta(x)\nabla_\theta\log\pi_\theta(x) + \nabla_\theta\cR_\theta(x)}^2 \right] \leq &\;  2\E_{x\sim \pi_\theta}\left[\norm{\cR_\theta(x)\nabla_\theta\log\pi_\theta(x)}^2\right] + 2 \E_{x\sim \pi_\theta}\left[\norm{ \nabla_\theta\cR_\theta(x)}^2 \right]\\
        \leq &\;  2U^2C_g^2 + 2\widetilde{C}_g^2. 
    \end{align*}
    Then, by the fact that $\E\left[\left(X - \E[X]\right)^2\right] \leq \E\left[X^2\right]$ for all random variable $X$, we have 
    \begin{align*}
        \E_{x\sim\pi_\theta}\left[\norm{\cR_\theta(x) \nabla_\theta\log\pi_{\theta}(x) + \nabla_\theta\cR_\theta(x) - \nabla_\theta\cJ(\theta)}^2\right]
        \leq &\; \E_{x\sim \pi_\theta}\left[\norm{\cR_\theta(x)\nabla_\theta\log\pi_\theta(x) + \nabla_\theta\cR_\theta(x)}^2\right] \\
        \leq &\; 2U^2 C_g^2 + 2\widetilde{C}_g^2,
    \end{align*}
    which completes the proof.
\end{proof}

\subsection{Proof of Theorem \ref{theorem-conv-pgd}}\label{proof-theorem-conv-pgd}
\begin{proof}[Proof of Theorem \ref{theorem-conv-pgd}]
	From Theorem \ref{theorem-convergence}, in order to ensure that $\hat\theta^T$ is a $\epsilon$-stationary point,  we can require 
	\begin{align}
		\left(2 + \frac{2}{\eta L(1-2\eta L)}\right) \E_{T}\left[\norm{g^t - \nabla_\theta\cJ(\theta^t)}^2\right] \leq &\; \frac{1}{2}\epsilon^2,\quad \forall\; t = 0,\dots,T-1, \label{eq-cor-pgd-1}\\
		\frac{\Delta}{T}\left(\frac{2}{\eta} + \frac{4}{\eta(1-2\eta L)}\right)  \leq &\; \frac{1}{2}\epsilon^2. \label{eq-cor-pgd-2}
	\end{align}
	It is easy to verify that $g^t$ is an unbiased estimator of $\nabla_\theta\cJ(\theta^t)$. Then, Lemma \ref{lemma-bdd-var} implies that 
	\[
		\E_{T}\left[\norm{g^t - \nabla_\theta\cJ(\theta^t)}^2\right] \leq \frac{\sigma^2}{N}.
	\]
	As a consequence, if one chooses $N = \left\lceil\frac{\sigma^2}{\epsilon^2}\left(4 + \frac{4}{\eta L(1-2\eta L)}\right)\right\rceil$, then \eqref{eq-cor-pgd-1} holds. 

	On the other hand, \eqref{eq-cor-pgd-2} holds if one sets $T= \left\lceil \frac{ \Delta }{\epsilon^2}\left(\frac{4}{\eta} + \frac{8}{\eta(1-2\eta L)}\right)\right\rceil$. Moreover, we see that the sample complexity can be computed as $TN = O(\epsilon^{-4})$. Therefore, the proof is completed.
\end{proof}

\subsection{Proof of Lemma \ref{lemma-var-importance}} \label{proof-lemma-var-importance}
\begin{proof}[Proof of Lemma \ref{lemma-var-importance}]
    First, recall that 
    \[
    \E_{x\sim\pi_{\theta'}}\left[\frac{\pi_{\theta}(x)}{\pi_{\theta'}(x)}\right] = 1.
    \]
	Then, by the definitions of $g$ and $g_w$, we can verify that 
	\begin{align*}
		&\; \E_{x\sim\pi_{\theta'}}\left[\norm{g(x,\theta') - g_w(x,\theta,\theta')}^2\right] \\
		\leq &\; 2\E_{x\sim\pi_{\theta'}}\left[\norm{g(x,\theta') - g(x,\theta)}^2\right] + 2\E_{x\sim\pi_{\theta'}}\left[\norm{g(x,\theta) - g_w(x,\theta,\theta')}^2\right] \\
		= &\; 2\int\norm{\cR_{\theta'}(x) \nabla_\theta \log\pi_{\theta'}(x) - \cR_{\theta}(x)\nabla_\theta \log\pi_{\theta}(x)  + \nabla_\theta\cR_{\theta'}(x) - \nabla_\theta\cR_\theta(x)}^2\pi_{\theta'}(x)\dd x \\
		&\; + 2\int\norm{\cR_\theta(x) \left(\nabla_\theta \log\pi_{\theta}(x) - \frac{\pi_\theta(x)}{\pi_{\theta'}(x)}\nabla_\theta \log\pi_{\theta}(x)\right) + \nabla_\theta\cR_\theta(x) - \frac{\pi_\theta(x)}{\pi_{\theta'}(x)}\nabla_\theta\cR_\theta(x)}^2\pi_{\theta'}(x)\dd x \\ 
		\leq &\; 6\int \norm{\cR_{\theta'}(x)\left(\nabla_\theta \log\pi_{\theta'}(x) - \nabla_\theta \log\pi_{\theta}(x)\right)}^2\pi_{\theta'}(x)\dd x  + 6\int \norm{\left(\cR_{\theta'}(x) - \cR_{\theta}(x)\right)\nabla_\theta \log\pi_{\theta}(x)}^2\pi_{\theta'}(x)\dd x \\
        &\; + 6\int\norm{\nabla_\theta\cR_{\theta'}(x) - \nabla_\theta\cR_\theta(x)}^2\pi_{\theta'}(x)\dd x  + 4\int  \norm{\cR_\theta(x)\left(1 - \frac{\pi_\theta(x)}{\pi_{\theta'}(x)}\right)\nabla_\theta \log\pi_{\theta}(x)}^2\pi_{\theta'}(x)\dd x \\
        &\; + 4\int\norm{\nabla_\theta\cR_\theta(x)\left(1 - \frac{\pi_\theta(x)}{\pi_{\theta'}(x)}\right)}^2 \pi_{\theta'}(x)\dd x\\
		\leq &\; \left(6U^2C_h^2 + 6C_g^2\widetilde{C}_g^2 + 6\widetilde{C}_h^2\right)\norm{\theta - \theta'}^2 + \left(4U^2C_g^2 + 4\widetilde{C}_g^2\right)\E_{x\sim\pi_{\theta'}}\left[\left(\frac{\pi_{\theta}(x)}{\pi_{\theta'}(x)} - 1\right)^2\right] \\
		= &\; \left(6U^2C_h^2 + 6C_g^2\widetilde{C}_g^2 + 6\widetilde{C}_h^2\right)\norm{\theta - \theta'}^2 + \left(4U^2C_g^2 + 4\widetilde{C}_g^2\right)\left(\int\frac{(\pi_{\theta}(x))^2}{\pi_{\theta'}(x)}\dd x - 1\right).
	\end{align*}
	We next consider the function $f(\theta):=\int\frac{(\pi_{\theta}(x))^2}{\pi_{\theta'}(x)}\dd x$. Taking the derivative of $f$ with respect to $\theta$, we get 
	\[
		\nabla_\theta f(\theta) = \int \frac{2\pi_{\theta}(x)\nabla_\theta\pi_\theta(x)}{\pi_{\theta'}(x)}\dd x.
	\]
	Moreover, since 
     \begin{align*}
         \nabla_\theta^2 \log\pi_\theta(x) = &\; \frac{1}{(\pi_\theta(x))^2}\left(\pi_\theta(x)\nabla_\theta^2\pi_\theta(x) - \nabla_\theta\pi_\theta(x)\nabla\theta\pi_\theta(x)^\top \right) \\
         = &\; \frac{1}{\pi_\theta(x)}\nabla_\theta^2\pi_\theta(x) - \nabla_\theta \log\pi_\theta(x) \nabla_\theta\log\pi_\theta(x)^\top,
     \end{align*}
    we see that the Hessian of $f$ with respect to $\theta$ can be computed as  
	\begin{align*}
		\nabla_\theta^2f(\theta) = &\; \int \frac{2}{\pi_{\theta'}(x)}\left(\nabla_\theta\pi_\theta(x)\nabla_\theta\pi_\theta(x)^\top + \pi_\theta(x)\nabla_\theta^2\pi_\theta(x)\right)\dd x \\
		= &\; \int \frac{2(\pi_\theta(x))^2}{\pi_{\theta'}(x)}\left(2\nabla_\theta\log\pi_\theta(x)\nabla_\theta\log\pi_\theta(x)^\top + \nabla_\theta^2\log\pi_\theta(x)\right)\dd x.
	\end{align*}
	
	Notice that $f(\theta') = 1$ and $\nabla_\theta f(\theta') = 0$. Therefore, by the Mean Value Theorem, we get 
	\begin{align*}
		f(\theta) = 1 + \frac{1}{2}\inprod{\nabla_\theta^2 f(\tilde \theta)(\theta - \theta')}{\theta - \theta'},
	\end{align*}
	where $\tilde \theta$ is a point between $\theta$ and $\theta'$. Now, from the expression of the Hessian matrix, we see that for any $\theta\in \R^n$,
	\begin{align*}
		\norm{\nabla_\theta^2f(\theta)}_2 \leq &\; \int \frac{2(\pi_\theta(x))^2}{\pi_{\theta'}(x)}\norm{2\nabla_\theta\log\pi_\theta(x)\nabla_\theta\log\pi_\theta(x)^\top + \nabla_\theta^2\log\pi_\theta(x)}_2\dd x \\
		\leq &\; 2(2C_g^2+C_h) \int \frac{(\pi_\theta(x))^2}{\pi_{\theta'}(x)}\dd x\\
		= &\; 2(2C_g^2+C_h)\left(1 + \E_{x\sim\pi_{\theta'}}\left[\left(\frac{\pi_{\theta}(x)}{\pi_{\theta'}(x)} - 1\right)^2\right]\right) \\
		\leq &\; 2(2C_g^2+C_h)(C_w^2+1).
	\end{align*}
	As a consequence, we have 
	\begin{align*}
		&\; \E_{x\sim\pi_{\theta'}}\left[\norm{g(x,\theta') - g_w(x,\theta,\theta')}^2\right]\\
		\leq &\; \left(6U^2C_h^2 + 6C_g^2\widetilde{C}_g^2 + 6\widetilde{C}_h^2\right)\norm{\theta - \theta'}^2 + \left(4U^2C_g^2 + 4\widetilde{C}_g^2\right)\left(\int\frac{(\pi_{\theta}(x))^2}{\pi_{\theta'}(x)}\dd x - 1\right) \\
        \leq &\; \left(6U^2C_h^2 + 6C_g^2\widetilde{C}_g^2 + 6\widetilde{C}_h^2 + \left(4U^2C_g^2 + 4\widetilde{C}_g^2\right)(2C_g^2+C_h)(C_w^2+1)\right)\norm{\theta - \theta'}^2,
	\end{align*}
	which completes the proof.
\end{proof}

\subsection{Proof of Lemma \ref{lemma-page-recursive}}\label{proof-lemma-page-recursive}
\begin{proof}[Proof of Lemma \ref{lemma-page-recursive}]
	By the definition of the stochastic gradient estimator given in Algorithm \ref{alg-page}, we can see that for $t\geq 0$,
	\begin{align*}
		&\; \E_{t+1}\left[\norm{g^{t+1} - \nabla_\theta\cJ(\theta^{t+1})}^2\right] \\
		= &\; p\E_{t+1}\left[\norm{\frac{1}{N_1}\sum_{j = 1}^{N_1}g(x^{t+1,j},\theta^{t+1}) - \nabla_\theta\cJ(\theta^{t+1})}^2\right] \\
		&\; + (1-p)\E_{t+1}\left[\norm{\frac{1}{N_2}\sum_{j = 1}^{N_2} \left(g(x^{t+1,j},\theta^{t+1}) - g_w(x^{t+1,j},\theta^t,\theta^{t+1})\right) + g^t - \nabla_\theta\cJ(\theta^{t+1})}^2\right] \\
        = &\; p\E_{t+1}\left[\norm{\frac{1}{N_1}\sum_{j = 1}^{N_1}g(x^{t+1,j},\theta^{t+1}) - \nabla_\theta\cJ(\theta^{t+1})}^2\right] \\
		&\; + (1-p)\E_{t+1}\left[\norm{\frac{1}{N_2}\sum_{j = 1}^{N_2} \left(g(x^{t+1,j},\theta^{t+1}) - g_w(x^{t+1,j},\theta^t,\theta^{t+1})\right) +  \nabla_\theta\cJ(\theta^t) - \nabla_\theta\cJ(\theta^{t+1}) + g^t - \nabla_\theta\cJ(\theta^t)}^2\right] \\
        \leq &\; p\E_{t+1}\left[\norm{\frac{1}{N_1}\sum_{j = 1}^{N_1}g(x^{t+1,j},\theta^{t+1}) - \nabla_\theta\cJ(\theta^{t+1})}^2\right] \\
		&\; + (1-p)\E_{t+1}\left[\norm{\frac{1}{N_2}\sum_{j = 1}^{N_2} \left(g(x^{t+1,j},\theta^{t+1}) - g_w(x^{t+1,j},\theta^t,\theta^{t+1})\right) +  g^t - \nabla_\theta\cJ(\theta^t)}^2\right] \\
		\leq &\; \frac{p\sigma^2}{N_1} + (1-p)\E_{t+1}\left[\norm{g^{t} - \nabla_\theta\cJ(\theta^{t})}^2\right] + (1-p)\frac{1}{N_2^2}\sum_{j = 1}^{N_2} \E_{t+1}\left[\norm{\left(g(x^{t+1,j},\theta^{t+1}) - g_w(x^{t+1,j},\theta^t,\theta^{t+1})\right)}^2\right] \\
		\leq &\; \frac{p\sigma^2}{N_1} + (1-p)\E_{t+1}\left[\norm{g^{t} - \nabla_\theta\cJ(\theta^{t})}^2\right] + \frac{(1-p)C}{N_2}\norm{\theta^{t+1} - \theta^t}^2,
	\end{align*}
	where in the first inequality, we use the facts that $\E\left[\left(X - \E[X]\right)^2\right] \leq \E\left[X^2\right]$ for all random variable $X$ and $g^t$ is unbiased estimator for $\nabla_\theta\cJ(\theta^t)$ for all $t\geq 0$, in the second inequality, we rely on the fact that $\{x^{t+1,j}\}$ is independent, and the last inequality is due to Lemma \ref{lemma-var-importance}. By summing the above relation across $t = 0,\dots, T-2$, we see that 
	\begin{align*}
	    &\; \sum_{t = 1}^{T-1} \E_{T}\left[\norm{g^t - \nabla_\theta\cJ(\theta^t)}^2\right] \\
        \leq &\; \frac{p\sigma^2(T-1)}{N_1} + (1-p)\sum_{t=0}^{T-2}\E_{t+1}\left[\norm{g^t - \nabla_\theta\cJ(\theta^t)}^2\right] + \frac{(1-p)C}{N_2}\sum_{t = 0}^{T-2}\E_{T}\left[\norm{\theta^{t+1} - \theta^{t}}^2\right],
	\end{align*}
	which implies that 
	\begin{equation}
		\label{eq-recursive-1}
		\sum_{t = 0}^{T-1} \E_{T}\left[\norm{g^t - \nabla_\theta\cJ(\theta^t)}^2\right] \leq \frac{p\sigma^2T + \sigma^2}{pN_1} + \frac{(1-p)C}{pN_2}\sum_{t = 0}^{T-1}\E_{T}\left[\norm{\theta^{t+1} - \theta^{t}}^2\right].
	\end{equation}
	Recall from \eqref{eq-conv-5} that 
	\[
		\sum_{t = 0}^{T-1} \norm{\theta^{t+1} - \theta^t}^2 
		\leq \frac{2\eta\Delta}{1-2\eta L} + \frac{\eta}{L(1-2\eta L)} \sum_{t = 0}^{T-1}\norm{g^t - \nabla_\theta\cJ(\theta^t)}^2, 
	\]
	which together with \eqref{eq-recursive-1} implies that 
	\[\left(1 -  \frac{(1-p)C\eta}{pN_2L(1-2\eta L)}\right)\sum_{t = 0}^{T-1}\E_{T}\left[\norm{g^t - \nabla_\theta\cJ(\theta^t)}^2\right] \leq \frac{p\sigma^2T + \sigma^2}{pN_1} + \frac{2\eta(1-p)C\Delta}{pN_2(1-2\eta L)}.\]
	Thus, the proof is completed.
\end{proof}

\subsection{Proof Theorem \ref{theorem-page-conv}}\label{proof-theorem-page-conv}
\begin{proof}[Proof Theorem \ref{theorem-page-conv}]
	Since $p = \frac{N_2}{N_1+N_2}\in (0,1)$ and 
	\[\eta \leq \frac{pN_2L}{2(1-p)C + 2pN_2L^2} = \frac{N_2^2L}{2N_1C + 2N_2^2L^2},\]
	we can readily check that 
    \begin{equation}
        \label{eq-page-conv-1}
        \eta \in \left(0, \frac{1}{2L}\right),\quad 1 -  \frac{(1-p)C\eta}{N_2L(1-2\eta L)}  \geq \frac{1}{2}.
    \end{equation}
    Then, we can see that 
	\begin{align*}
		&\; \E_{T}\left[\mathrm{dist}\left(0, -\nabla_\theta\cJ(\hat\theta^T) + \partial \cG(\hat\theta^T)\right)^2\right] \\
		\leq &\; \left(2 + \frac{2}{\eta L(1-2\eta L)}\right) \frac{1}{T}\sum_{t = 0}^{T-1}\E_{T}\left[\norm{g^t - \nabla_\theta\cJ(\theta^t)}^2\right] + \frac{1}{T}\left(\frac{2\Delta}{\eta} + \frac{4\Delta}{\eta(1-2\eta L)}\right) \\
		\leq &\; \frac{1}{T}\left(2 + \frac{2}{\eta L(1-2\eta L)}\right)\left(1 -  \frac{(1-p)C\eta}{pN_2L(1-2\eta L)}\right)^{-1}\left(\frac{p\sigma^2T + \sigma^2}{pN_1} + \frac{2\eta(1-p)C\Delta}{pN_2(1-2\eta L)} \right) \\
		&\; + \frac{1}{T}\left(\frac{2\Delta}{\eta} + \frac{4\Delta}{\eta(1-2\eta L)}\right) \\
		\leq &\; \frac{4}{T}\left(1 + \frac{1}{\eta L(1-2\eta L)}\right) \left(\frac{T\sigma^2}{N_1} + \frac{(N_1+N_2)\sigma^2}{N_1N_2} + \frac{2\eta N_1C\Delta}{N_2^2(1-2\eta L)} \right) + \frac{2\Delta}{T}\left(\frac{1}{\eta} + \frac{2}{\eta(1-2\eta L)}\right)
	\end{align*}	
	where $\Delta:= \cF^* - \cF(\theta^0) > 0$ is a constant, the first inequality is due to Theorem \ref{theorem-convergence}, the second inequality is derived from Lemma \ref{lemma-page-recursive}, and the third inequality is implied by \eqref{eq-page-conv-1}.
	
	Then, in order to have $\E_{T}\left[\mathrm{dist}\left(0, -\nabla_\theta\cJ(\hat\theta^T) + \partial\cG(\hat \theta^T)\right)^2\right] \leq \epsilon^2$ for a given tolerance $\epsilon > 0$, we can simply set $N_2 = \sqrt{N_1}$,  
    \[
        \eta \leq \frac{N_2^2L}{2N_1C + 2N_2^2L^2} = \frac{L}{2C  + 2L^2},
    \]
    and require that 
	\begin{align*}
		4\left(1 + \frac{1}{\eta L(1-2\eta L)}\right)\frac{\sigma^2}{N_1} \leq &\; \frac{\epsilon^2}{3},\\
		\frac{4}{T}\left(1 + \frac{1}{\eta L(1-2\eta L)}\right) \frac{(N_1+N_2)\sigma^2}{N_1N_2} \leq &\; \frac{\epsilon^2}{3},\\  
		\frac{2\Delta}{T} \left[\left(1 + \frac{1}{\eta L(1-2\eta L)}\right)\frac{4\eta N_1C}{N_2^2(1-2\eta L)} + \frac{1}{\eta} + \frac{2}{\eta(1-2\eta L)}\right] \leq &\; \frac{\epsilon^2}{3}.
	\end{align*}
	Therefore, it suffices to set $N_1 = O(\epsilon^{-2})$, $N_2 = \sqrt{N_1} = O(\epsilon^{-1})$ and $T = O(\epsilon^{-2})$. (We ignore deriving the concrete expressions of $T$, $N_1$ and $N_2$, in terms of $\epsilon$ and other constants, but only give the big-O notation here for simplicity.)

	Finally, we can verify that the sample complexity can be bounded as 
	\[
		N_1 + T\left(pN_1 + (1-p)N_2\right)	= N_1 + T\frac{2N_1N_2}{N_1+N_2} \leq N_1 + 2T N_2 = O(\epsilon^{-3}).
	\]
	Therefore, the proof is completed.
\end{proof}

\end{document}